%% file: main.tex
\documentclass[twoside]{article}

\usepackage[accepted]{aistats2026}
\usepackage[round]{natbib}

\usepackage{hyperref}
\usepackage{amsmath,amssymb,amsthm}
\usepackage[ruled,vlined]{algorithm2e}
\usepackage{graphicx}
\usepackage{booktabs}
\usepackage{subfigure}
\usepackage{subcaption}
\usepackage{multirow}

\DeclareMathOperator*{\argmax}{argmax}
\DeclareMathOperator*{\argmin}{argmin}

\begin{document}

%

%

\twocolumn[

\aistatstitle{High-dimensional Level Set Estimation with Trust Regions and Double Acquisition Functions}

\aistatsauthor{ Giang Ngo \And Dat Phan Trong \And Dang Nguyen \And Sunil Gupta }

\aistatsaddress{ Applied Artificial Intelligence Initiative, Deakin University } ]

\begin{abstract}
Level set estimation (LSE) classifies whether an unknown function's value exceeds a specified threshold for given inputs, a fundamental problem in many real-world applications.
In active learning settings with limited initial data, we aim to iteratively acquire informative points to construct an accurate classifier for this task.
In high-dimensional spaces, this becomes challenging where the search volume grows exponentially with increasing dimensionality.
We propose TRLSE, an algorithm for high-dimensional LSE, which identifies and refines regions near the threshold boundary with dual acquisition functions operating at both global and local levels.
We provide a theoretical analysis of TRLSE's accuracy and show its superior sample efficiency against existing methods through extensive evaluations on multiple synthetic and real-world LSE problems.
\end{abstract}

\input{1_intro.tex}
\input{2_related_works.tex}
\input{3_method.tex}
\input{4_theoretical_analysis.tex}
\input{5_experiments.tex}
\input{6_conclusion.tex}

\bibliography{main}
\bibliographystyle{plainnat}

\section*{Checklist}

\begin{enumerate}

  \item For all models and algorithms presented, check if you include:
  \begin{enumerate}
    \item A clear description of the mathematical setting, assumptions, algorithm, and/or model. [\textbf{Yes}/No/Not Applicable]
    \item An analysis of the properties and complexity (time, space, sample size) of any algorithm. [\textbf{Yes}/No/Not Applicable]
    \item (Optional) Anonymized source code, with specification of all dependencies, including external libraries. [\textbf{Yes}/No/Not Applicable]
  \end{enumerate}

  \item For any theoretical claim, check if you include:
  \begin{enumerate}
    \item Statements of the full set of assumptions of all theoretical results. [\textbf{Yes}/No/Not Applicable]
    \item Complete proofs of all theoretical results. [\textbf{Yes}/No/Not Applicable]
    \item Clear explanations of any assumptions. [\textbf{Yes}/No/Not Applicable]     
  \end{enumerate}

  \item For all figures and tables that present empirical results, check if you include:
  \begin{enumerate}
    \item The code, data, and instructions needed to reproduce the main experimental results (either in the supplemental material or as a URL). [\textbf{Yes}/No/Not Applicable]
    \item All the training details (e.g., data splits, hyperparameters, how they were chosen). [\textbf{Yes}/No/Not Applicable]
    \item A clear definition of the specific measure or statistics and error bars (e.g., with respect to the random seed after running experiments multiple times). [\textbf{Yes}/No/Not Applicable]
    \item A description of the computing infrastructure used. (e.g., type of GPUs, internal cluster, or cloud provider). [\textbf{Yes}/No/Not Applicable]
  \end{enumerate}

  \item If you are using existing assets (e.g., code, data, models) or curating/releasing new assets, check if you include:
  \begin{enumerate}
    \item Citations of the creator If your work uses existing assets. [\textbf{Yes}/No/Not Applicable]
    \item The license information of the assets, if applicable. [Yes/No/\textbf{Not Applicable}]
    \item New assets either in the supplemental material or as a URL, if applicable. [Yes/No/\textbf{Not Applicable}]
    \item Information about consent from data providers/curators. [Yes/No/\textbf{Not Applicable}]
    \item Discussion of sensible content if applicable, e.g., personally identifiable information or offensive content. [Yes/No/\textbf{Not Applicable}]
  \end{enumerate}

  \item If you used crowdsourcing or conducted research with human subjects, check if you include:
  \begin{enumerate}
    \item The full text of instructions given to participants and screenshots. [Yes/No/\textbf{Not Applicable}]
    \item Descriptions of potential participant risks, with links to Institutional Review Board (IRB) approvals if applicable. [Yes/No/\textbf{Not Applicable}]
    \item The estimated hourly wage paid to participants and the total amount spent on participant compensation. [Yes/No/\textbf{Not Applicable}]
  \end{enumerate}

\end{enumerate}

\clearpage
\appendix
\thispagestyle{empty}

\onecolumn
\aistatstitle{Appendices}

\input{7_appendix_proof.tex}
\input{7_appendix_extra_exp.tex}

\end{document}

%% file: 1_intro.tex
\section{Introduction}
Given a threshold \(h\) and a black-box function \(f\), the \textit{superlevel} (or \textit{sublevel}) set of \(f\) comprises all points \(\mathbf{x}\) where \(f(\mathbf{x})\geq h\) (or \(f(\mathbf{x})<h\)). 
Building a classifier to classify points into either set is challenging when \(f\) is an expensive-to-evaluate black-box function leading to limited training data. 
Examples of such applications include computational chemistry \citep{10.1063/5.0044989}, environmental monitoring \citep{activelse}, and vehicle structure design \citep{c2lse}, where data collection is costly.
An active learning solution to LSE needs to iteratively evaluate informative points to improve the classifier.
Thus, sample efficiency is a key objective when designing LSE algorithms, alongside achieving high classification accuracy.

LSE algorithms select the next point by optimizing an \textit{acquisition function} (AF), which quantifies utility for the classifier using a surrogate model (typically a Gaussian Process (GP)) fitted on existing data points \citep{rmile}.
In high-dimensional LSE (HDLSE), this scheme is challenged by GP's performance degradation in high-dimensional spaces, but a recent work by \citet{hvarfner2024vanillabayesianoptimizationperforms} shows that adjusting GP kernel lengthscales can significantly alleviate this issue.
However, challenges in HDLSE go beyond GP modeling.
First, optimizing the AF, often a non-convex problem, becomes harder as the search space expands exponentially. 
Second, GP inference scales cubically with data size, making AF computation slow as HDLSE accumulates more points.

Bayesian Optimization (BO), a related problem which optimizes \(f\) instead, faces similar challenges in a high-dimensional setting.
Local BO methods \citep{NEURIPS2019_6c990b7a, daulton2022multi} have become a popular solution in this setting.
They optimize the AF over local trust regions (TRs) instead of the entire global search space, and each TR is accurately modeled by its own local GP. 
Multiple TRs are first randomly initialized, and their expansion/shrinkage is dictated by whether they can improve upon the incumbent solution.
A TR without improvement will be replaced by a randomly initialized one to ensure eventual global optimization.
An advantage of local BO is that optimizing the AF over a bounded local region is easier than over the entire space, with less over-exploration and faster computation.

However, adapting the local BO framework to HDLSE is not straightforward due to fundamental differences between the two problems.
First, defining the success of a trust region (TR) for LSE is non-trivial. 
TR performance in BO is tied to improvement over the incumbent solution, but measuring such TR performances in LSE is unclear, complicating the ability to expand, shrink, or replace underperforming TRs effectively.
Second, it is also ambiguous where a TR's center should be located. 
In BO, TRs are naturally centered at the incumbent solutions or the optima of a confidence bound.
No such concepts exist in LSE, leaving the placement of TRs' centers uncertain.
Finally, local BO only focuses on promising regions to find the next best solution inside those regions, while LSE demands accurate classification across the entire space. 
This creates a significant gap: while points within TRs might be well-modeled and classified, those outside the TRs, despite being the majority since TRs are often small, are neglected. 

We address the challenges above of adapting local BO for HDLSE tasks with a novel method, referred to as \textbf{Trust-region Level Set Estimation} (TRLSE). 
To reveal the location of the level sets, it is important to identify the \textit{threshold boundary} region where \(f(\textbf{x})\approx h\) helps reveal the location of the level sets.
Therefore, we focus the TRs at this boundary and expand a region if it is located around the threshold boundary. 
TRLSE employs two levels of acquisition functions: a \textit{global} AF to identify regions likely to contain the threshold boundary, and a \textit{local} AF within each TR to capture boundary details with finer resolution. 
The global AF helps revealing the threshold boundary over the entire space, extending accurate classification to outside the local TRs.
Optimizing a local AF over bounded local regions enjoys the benefits mentioned above, including less over-exploration, faster computation, and a better model for the threshold boundary, which is often the most uncertain region to classify in an LSE problem.

Our contributions are summarized as follows:
\begin{enumerate}
    \item We propose TRLSE, a novel algorithm for HDLSE with multiple trust regions to reveal the threshold boundary and two levels of acquisition functions to guide region initialization and local sampling.
    \item We theoretically analyze TRLSE, guaranteeing its classification accuracy outside the trust regions, which accounts for most of the search space.
    \item We demonstrate TRLSE's superior performance on a variety of synthetic and real-world LSE problems across both low and high-dimensional search spaces (from 2 - 1000 dimensions).
\end{enumerate}

%% file: 2_related_works.tex
\section{Related Works}\label{related_works}
\paragraph{Level Set Estimation}
Most LSE methods focus on designing AFs that reveal information about the level sets of \(f\). 
The Straddle heuristic \citep{straddle}, LSE \citep{activelse} (later referred to as ActiveLSE to avoid confusion with the setting's abbreviation), and C2LSE \citep{c2lse} utilize confidence intervals to prioritize evaluating where the classification by the surrogate model is the most uncertain. 
These points are often near high-uncertainty regions and/or potential threshold boundaries.
One-step lookahead mechanisms are also used; for example, TruVar \citep{truvar} considers the potential reduction in truncated variance while unifying LSE and BO, and RMILE \citep{rmile} prioritizes points expected to expand the superlevel set volume.
While the idea sounds promising, AFs using one-step lookahead mechanism often suffer from high computational complexity.
Other approaches include an information-theoretic acquisition \citep{nguyen2021information} and sampling strategy based on experimental design \citep{mason2021nearly}.
Despite these alternatives, the Straddle heuristic is popularly employed in many real-world applications due to strong empirical performance and adaptability to both discrete and continuous LSE problems.
Nevertheless, optimizing AFs becomes increasingly challenging in high-dimensional spaces due to the exponential growth of the search space and the computational demands of GP surrogate models.
\paragraph{High-dimensional LSE}
HLSE \citep{ha2021high} used BNN with Monte Carlo dropout as the surrogate model instead of a GP. 
While a BNN generally scales better to high-dimensional inputs than a GP, HLSE suffers from remarkably higher runtime due to an extensive, ongoing hyperparameter tuning and lacks theoretical guarantees for solution accuracy. 
Though not specifically targeting HDLSE, \citet{multiscale} proposed a hierarchical space-partitioning method to explore the search space at different levels of detail with potentially smaller computational costs in higher dimensionalities. 
However, in truly high-dimensional spaces, the exponential growth in number of partitioned regions limits its effectiveness.
The method also offers only theoretical convergence guarantees, lacking both rigorous empirical evaluation and practical guidance for hyperparameter selection.
\paragraph{High-dimensional BO}
Many high-dimensional BO methods rely on certain assumptions such as the existence of a representative low-dimensional subspace \citep{rembo, pmlr-v97-kirschner19a}, an additive structure \citep{Han_Arora_Scarlett_2021, pmlr-v84-rolland18a}, or proximity of the optimum to the search space center \citep{pmlr-v80-oh18a}.
Without such assumptions, adjusting kernel lengthscales \citep{pmlr-v70-rana17a, hvarfner2024vanillabayesianoptimizationperforms} is also a common strategy which assumes a reduced complexity over \(f\).
Local BO \citep{zhao2022multiobjective, daulton2022multi} has become prominent recently with its scalability to large evaluation budgets while retaining the advantages of GPs as surrogate models, with TuRBO \citep{NEURIPS2019_6c990b7a} being one of the most widely adopted algorithms.
Using multiple local trust regions avoids over-exploration due to smaller bounded search spaces which allows faster, more accurate acquisition optimization.
However, local BO methods, designed for optimization, do not directly apply to LSE, which requires accurate classification of the entire input space rather than finding a single optimum.

%% file: 3_method.tex
\section{Methodology}
\subsection{Problem Definition}
Let \(f:\mathcal{X}\longrightarrow \mathbb{R}\) be a black-box function where \(\mathcal{X}\subset \mathbb{R}^d\) is a compact set. 
An LSE problem aims to classify any \(\mathbf{x}\in\mathcal{X}\) into either the superlevel set \(\mathcal{H}=\{\mathbf{x}\in \mathcal{X}: f(\mathbf{x})\geq h\}\) and the sublevel set \(\mathcal{L}=\{\mathbf{x}\in \mathcal{X}: f(\mathbf{x})<h\}\) for a given threshold \(h\).
More information about \(f\) can be obtained through noisy function evaluations \(y=f(\mathbf{x})+\eta\) at \(\mathbf{x}\) with observation noise \(\eta\sim\mathcal{N}(0,\sigma^2)\).
\subsection{Trust-region Level Set Estimation}
We now present TRLSE, a method utilizing multiple local trust regions for high-dimensional LSE. 
The method steps are summarized in Algorithm \ref{alg:one}, with a detailed description for each component presented in the following sections.
TRLSE simultaneously maintains \(R\) trust regions at every time step, and in the beginning, they are initialized by creating \(R\) hypercubes of volume \(V_{init}\) centered at \(R\) data points sampled uniformly within the search space \(\mathcal{X}\). 
At each iteration, the TRs are updated to center at the points deemed closest to the threshold boundary and to change their volume according to how well their function values range contains the threshold (see Section \ref{update}). 
Further, if a region is too "small", it will be replaced by a new region initialized by optimizing the global AF (see Section \ref{double_acq}). 
The algorithm then proceeds to refine the local regions by choosing the next evaluation point through jointly optimizing the local AF over all TRs (see Section \ref{double_acq}). 
Both AFs share the goal of balancing between exploring the search space and identifying the threshold boundary. 
The local AF allows a granular surrogate model of the threshold boundary, while the global one ensures coverage on the entire search space. 
It should be noted that the classification rules in the end of Algorithm \ref{alg:one} can be applied at any time step of the process, not necessarily at the end. 
These rules use the global GP to classify outside of the TRs and the local GPs to classify inside the TRs.
If a point belongs to more than one TRs, it will be classified by the local GP that gives the lowest posterior variance.

\begin{algorithm}[ht]
\caption{Trust-region level set estimation}
\label{alg:one}
\KwIn{threshold \(h\), budget \(B\), \(V_{init}\), \(V_{max}\), \#trust regions \(R\), confidence parameter \(\beta\).}
Initialize \(R\) regions \([\mathcal{T}^i]_{i=1}^R\) of volume \(V_{init}\) centered at \(D=[\mathcal{C}_0^i]_{i=1}^R\) with \(\mathcal{C}_0^i\sim \textup{Uniform}(\mathcal{X})\)\;
\For{$i=1$ \KwTo $R$}{
    Fit \(\mathcal{GP}_0^i\) for \(\mathcal{T}^i\) with \([\mathcal{C}_0^i,y_i]\), \(y_i=f(\mathcal{C}_0^i)+\eta\)\;
}
\(t\leftarrow 1; n\leftarrow R\); Fit \(\mathcal{GP}_g\) with \(D\)\;
\While{$|D| \leq B$}{
    \For{$i=1$ \KwTo $R$}{
        Update centroid \(\mathcal{C}_t^i\), volume \(V^i_t\), and length \(L^i_t\) of \(\mathcal{T}^i\) \tcp*{See Section \ref{update}}
        Update \(\mathcal{GP}_t^i\) with \(\{\mathbf{x}\in D\cap[\mathcal{C}_t^i\pm L_t^i]\}\)\;
    }
    \For{$i=1$ \KwTo $R$}{
        \If{$V^i_t<V_{init}/2$}{
            Discard \(\mathcal{T}^i\)\;
            Select the new centroid $\mathcal{C}_t^i$ at $\bar{\mathbf{x}}_n=\argmax_{\mathbf{x}\in \Bar{\mathcal{U}}_t} a_g(\mathbf{x})$ \tcp*{See Eq.\ref{acq_global}}
            Evaluate \(f\) at \(\bar{\mathbf{x}}_n:\bar{y}_n=f(\bar{\mathbf{x}}_n)+\eta\)\;
            Re-initialize \(\mathcal{T}_i\) at \(\mathcal{C}_t^i\) with volume \(V_{init}\)\;
            \(D=D\cup(\bar{\mathbf{x}}_n,\bar{y}_n)\), update \(\mathcal{GP}_g\) with \(D\)\; 
            \(n\leftarrow n+1\)\;
        }
    }
    Select \(\mathbf{x}_t=\argmax_{\mathbf{x}\in\mathcal{U}^t} a_l(\mathbf{x})\) \tcp*{See Eq.\ref{acq_local}}
    Get evaluation \(y_t\) of \(f\) at \(\mathbf{x}_t\); \(D=D\cup(\mathbf{x}_t, y_t)\)\;
    Update \(\mathcal{GP}_g\) with \(D\); \(t\leftarrow t+1\)\;
}
\KwOut{Classification rules: \\
\(\mathbf{x}\in\hat{\mathcal{H}}_T\) (predicted superlevel set) if \(\mu_g(\mathbf{x})\geq h\) if $\mathbf{x}\in \Bar{\mathcal{U}}_T$ or \(\mu_T^i(\mathbf{x})\geq h\) if $\mathbf{x}\in\mathcal{T}^i$\;
\(\mathbf{x}\in\hat{\mathcal{L}}_T\) (predicted sublevel set) if \(\mu_g(\mathbf{x})< h\) if $\mathbf{x}\in \Bar{\mathcal{U}}_T$ or \(\mu_T^i(\mathbf{x})< h\) if $\mathbf{x}\in\mathcal{T}^i$\;
}
\end{algorithm}
Below is a notation list for the reader's convenience:
\begin{itemize}
    \item \(\mathcal{T}^i\): the \(i\)-th TR.
    \item \(\mathcal{C}_t^i,V_t^i,L_t^i\): the centroid, volume, and lengths of \(\mathcal{T}^i\) at the end of iteration \(t\) respectively.
    \item \(\mathcal{GP}_t^i\): the local GP of \(\mathcal{T}^i\) at iteration \(t\) (with posterior mean \(\mu_t^i(\mathbf{x})\) and variance \((\sigma_t^i(\mathbf{x}))^2\)).
    \item \(\mathcal{U}_t=\bigcup\limits_{i=1}^{R}\mathcal{T}^i\) and \(\Bar{\mathcal{U}}_t=\mathcal{X}\setminus\mathcal{U}_t\).
    \item \(D\): the set of all evaluated data points.
    \item \(n\): total number of TRs ever initialized.
    \item \(\mathcal{GP}_g\): the global GP fitted with \(D\) (with posterior mean \(\mu_g(\mathbf{x})\) and variance \(\sigma_g^2(\mathbf{x})\) at \(\mathbf{x}\)).
\end{itemize}
\subsubsection{Trust Region Update}\label{update}
\paragraph{Moving Region Centroid}
To solve the centering problem for TRs, TRLSE updates each TR to cover the threshold boundary as much as possible given the available information about \(f\). 
Thus, the first step is to move the centroid of \(\mathcal{T}^i\) to the point deemed closest to the threshold boundary by \(\mathcal{GP}^i_{t-1}\) as follows:
\begin{align}
    \mathcal{C}_t^i=\argmin_{\mathbf{x}\in\mathcal{T}^i}|\mu_{t-1}^i(\mathbf{x})-h|.\label{move_centroid}
\end{align}
\paragraph{Updating Volume and Length}
Next, we define the update rule for the volume of \(\mathcal{T}^i\).
TuRBO simply doubles the TR's volume when it experiences a series of consecutive progress upon the incumbent solution (and halves the TR's volume with a series of failures).
To address the challenge of defining the success of a TR for LSE, we reward a TR if it centers at the threshold boundary (and shrinks it if it deviates from this boundary).
To determine if \(\mathcal{T}^i\) successfully brackets the threshold $h$, we define a penalty based on whether the confidence interval of the function's range within the TR contains \(h\) or not.
Specifically, we first define the penalty for a region \(\mathcal{T}^i\) at iteration \(t\) as:
\begin{align}
    \mathcal{P}_t(\mathcal{T}^i)=\Phi\left(\left|\frac{\Bar{l}_{t-1}^i+\Bar{u}_{t-1}^i-2h}{2\Bar{\sigma}_{t-1}^i}\right|\right)
\end{align}
with \(\Phi\) being the standard normal CDF and:
\begin{align*}
    \Bar{l}_{t-1}^i&=\min_{\mathbf{x}\in\mathcal{T}^i}\left(\mu_{t-1}^i(\mathbf{x})-\beta\sigma_{t-1}^i(\mathbf{x})\right)\\
    \Bar{u}_{t-1}^i&=\max_{\mathbf{x}\in\mathcal{T}^i}\left(\mu_{t-1}^i(\mathbf{x})+\beta\sigma_{t-1}^i(\mathbf{x})\right)\\
    \Bar{\sigma}_{t-1}^i&=\frac{\Bar{u}_{t-1}^i-\Bar{l}_{t-1}^i}{2\beta}
\end{align*}
where \(\beta\) is the confidence parameter. 
Using both \(\Bar{l}_{t-1}^i\) and \(\Bar{u}_{t-1}^i\) allows the update to take uncertainty (within \(\mathcal{T}^i\)) into account while giving a rigorous estimate of the function value's range. 
We use \(\Phi\) to limit the penalty to at most 1 to avoid a sudden, drastic change in region volume. 
Though this limit can slow down the discard of underperforming regions, it also protects newly created regions from being discarded prematurely, leaving a chance for local exploration.
\(\mathcal{P}_t(\mathcal{T}^i)\) increases when function values within \(\mathcal{T}^i\) are far from \(h\) and decreases when they are close to \(h\).

Instead of simply doubling/halving region volumes, we adjust them according to their penalty as follows:
\begin{align}
    V_t^i=\min(V_{t-1}^i S(\mathcal{P}_{t-1}(\mathcal{T}^i)), V_{max})\label{update_volume}
\end{align}
where \(S(\cdot)\) smoothly decreases from around 2 to 0 as the penalty \(\mathcal{P}_t\) increases from 0.5 to 1 (the feasible value range of \(\mathcal{P}_t\)) so that \(S(\mathcal{P}_t)\) serves as an adjustment factor for updating region volume. 
We discuss different choices for \(S(\cdot)\) in Section \ref{S_choices}.
Each TR is allowed to expand up to \(V_{max}\) to ensure it is sufficiently small for accurate acquisition optimization.
Similar to TuRBO, the region lengths are updated as:
\begin{align}
    L_t^i=(V_t^i/\prod_{j}^{d}\lambda_j)^{1/d}\cdot(\lambda_1,\lambda_2,...,\lambda_d)^T\label{update_lengthscale}
\end{align}
where \(\lambda_k\) is the lengthscale for the \(k\)-th dimension of \(\mathcal{GP}_{t-1}^i\). 
This allows the region to expand more along dimensions where the function is not so sensitive.
\paragraph{Updating Local GP}
Finally, \(\mathcal{GP}_t^i\) is fitted with any existing training data point \(\mathbf{x}\in D\) s.t. \(\mathbf{x}\in[\mathcal{C}_t^i - L_t^i,\mathcal{C}_t^i + L_t^i]\) (i.e. the hyperrectangle of length \(2L_t^i\) centered at \(\mathcal{C}_t^i\)). 
By using data points from immediately outside the TR, \(\mathcal{GP}_t^i\) models the TR's border better, which can be inaccurate if only data points inside the TR are used.
\subsubsection{Two Levels of Acquisition Functions}\label{double_acq}
Finally, TRLSE employs a dual AF system to ensure global accuracy. 
The local AF refines the boundary within TRs, while the global AF helps re-initialize failing TRs at informative locations, preventing the algorithm from neglecting any part of the search space.
\paragraph{Global AF for TR suggestion}
A TR is discarded when its volume falls below \(V_{init}/2\), suggesting it is unlikely to contain the threshold boundary. 
The data points collected within this region are retained in the training set. 
Given \(n-1\) regions initialized so far, the centroid \(\mathcal{C}_t^i\) of the new TR will be at:
\begin{align}
    \bar{\mathbf{x}}_n=\underset{\mathbf{x}\in\Bar{\mathcal{U}}_t}{\textup{argmax } }a_g(\mathbf{x})\label{acq_global}
\end{align}
with \(a_g\) being an AF (calculated using \(\mathcal{GP}_g\)) that increases its value at locations that are either associated with high uncertainty (i.e. exploration) or deemed close to the threshold boundary (i.e. exploitation). 
We choose the Straddle heuristic \citep{straddle}, which can be easily optimized over continuous spaces with a low computational cost:
\begin{align}
    a_g(\mathbf{x})=\beta\sigma_g(\mathbf{x})-|\mu_g(\mathbf{x})-h|.\label{eqn:straddle}
\end{align}
The function value \(\bar{y}_n\) at \(\bar{\mathbf{x}}_n\) is evaluated, and \(\mathcal{GP}_t^i\) is re-fitted as in the region update step.
This replacement ensures no underperforming region remains for too long and becomes wasteful in the local acquisition optimization step, and that eventually all TRs remaining will focus on the threshold boundary.
\paragraph{Local AF for refining regions}
Local modeling is further enriched by evaluating points from within \(\mathcal{U}_t\):
\begin{align}
    \mathbf{x}_t&=\argmax_{x_l^i:1\leq i \leq R} a_l^i\label{acq_local}\\
    \text{ where }a_l^i&=\max_{\mathbf{x}\in\mathcal{T}_i} a_l(\mathbf{x})\text{ and }x_l^i=\argmax_{\mathbf{x}\in\mathcal{T}_i} a_l(\mathbf{x})\nonumber
\end{align}
where \(a_l(\mathbf{x})\) is defined as \(a_g(\mathbf{x})\) but uses the local posterior mean \(\mu_t^i(\mathbf{x})\) and standard deviation \(\sigma_t^i(\mathbf{x})\).
Sampling inside the TRs allows a better model for the detected threshold boundary.

%% file: 4_theoretical_analysis.tex
\section{Theoretical Analysis}
\theoremstyle{plain}
\newtheorem{theorem}{Theorem}[section]
\newtheorem{proposition}[theorem]{Proposition}
\newtheorem{lemma}[theorem]{Lemma}
\newtheorem{corollary}[theorem]{Corollary}
\theoremstyle{definition}
\newtheorem{definition}[theorem]{Definition}
\newtheorem{assumption}[theorem]{Assumption}
\theoremstyle{remark}
\newtheorem{remark}[theorem]{Remark}
We now theoretically analyze TRLSE's accuracy.
All proofs can be found in Appendix \ref{proof}.
\subsection{Ensuring Progress via Trust Region Re-initialization}\label{sec:trust_region_reinitialization}
TRLSE initializes regions for global coverage and samples within them for local accuracy, so its performance depends on both the number of regions initialized and samples collected.
While the latter inherently increases over the iterations, it is not straightforward that the former will increase (i.e. new regions may not be initialized).
We first show that underperforming TRs are surely discarded, which means initialization of new regions will happen. 
\begin{lemma}\label{first_lemma}
    Let \(\beta\geq\Phi^{-1}(\psi)\) with \(\psi\in(0.5,1)\) and \(S(u)=2/(1+\textup{exp}(au-b))\) with \(b=\psi a\).
    Any region \(\mathcal{T}^i\) where either \(\Bar{l}_{t-1}^i\geq h\) or \(\Bar{u}_{t-1}^i\leq h\) and remains so for the next iterations will be replaced after at most \(\zeta\) iterations with \(\zeta=\log(V_{init}/V_{max}^2)/(\log(2)-\log(1+\exp(-b+a\Phi(\beta))))\).
\end{lemma}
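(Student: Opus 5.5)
The plan is a geometric-shrinkage argument. First I show that the bracketing failure forces the penalty to be at least $\Phi(\beta)$. Then the volume update contracts by a fixed factor below one at every such step, and counting iterations until the volume drops below $V_{init}/2$ gives the bound.

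\emph{Step 1 (penalty bound).} Suppose $\Bar{l}_{t-1}^i\ge h$. Since $\Bar{u}_{t-1}^i-\Bar{l}_{t-1}^i=2\beta\Bar{\sigma}_{t-1}^i$, we have
\begin{align*}
\Bar{l}_{t-1}^i+\Bar{u}_{t-1}^i-2h=(\Bar{u}_{t-1}^i-\Bar{l}_{t-1}^i)+2(\Bar{l}_{t-1}^i-h)\ge 2\beta\Bar{\sigma}_{t-1}^i .
\end{align*}
Hence the argument of $\Phi$ in $\mathcal{P}_t(\mathcal{T}^i)$ is at least $\beta$, and monotonicity of $\Phi$ gives $\mathcal{P}_t(\mathcal{T}^i)\ge\Phi(\beta)\ge\psi$. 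The case $\Bar{u}_{t-1}^i\le h$ is symmetric: the numerator is $\le -2\beta\Bar{\sigma}_{t-1}^i$, and we take the absolute value. By hypothesis this holds at every subsequent iteration. I will take $\Bar{\sigma}_{t-1}^i>0$, i.e.\ $\beta>0$ and nonzero posterior variance, so that the penalty is well defined.

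\emph{Step 2 (contraction factor).} The map $S(u)=2/(1+\exp(au-b))$ is decreasing in $u$ for $a>0$. So at each such iteration
\begin{align*}
S(\mathcal{P})\le\rho:=\frac{2}{1+\exp(a\Phi(\beta)-b)}.
\end{align*}
Because $b=\psi a$ and $\Phi(\beta)\ge\psi$, we get $a\Phi(\beta)-b\ge 0$, hence $\rho\le 1$. Strict contraction $\rho<1$ needs $\Phi(\beta)>\psi$, so I will assume $\beta>\Phi^{-1}(\psi)$ strictly or treat equality separately. The update \eqref{update_volume} then gives $V_t^i\le\rho V_{t-1}^i$, and by induction $V_{t+k}^i\le\rho^k V_t^i\le\rho^k V_{max}$, since volumes never exceed $V_{max}$.

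\emph{Step 3 (counting).} Discard happens once $\rho^k V_{max}<V_{init}/2$, i.e.\ for
\begin{align*}
k>\frac{\log(V_{init}/(2V_{max}))}{\log\rho},
\end{align*}
with $\log\rho=\log 2-\log(1+\exp(\cdot))<0$. It remains to match this with the stated $\zeta=\log(V_{init}/V_{max}^2)/\log\rho$. Both quotients have the same denominator, and $\log\rho<0$. So $\zeta$ dominates the count exactly when $\log(V_{init}/V_{max}^2)\le\log(V_{init}/(2V_{max}))$, i.e.\ when $V_{max}\ge 2$.

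\emph{Main obstacle.} The algebra above is routine. The delicate step is reconciling the constant in $\zeta$, because $V_{max}\ge 2$ fails in a unit-cube normalization. I would first check the conventions: how volume is measured (e.g.\ a rescaled domain), and whether the exponent in the stated formula carries the sign $-b+a\Phi(\beta)$ as written. If $\zeta$ does not dominate under the paper's conventions, I would state the bound in the form $\lceil\log(V_{init}/(2V_{max}))/\log\rho\rceil$ and note that it has the same order as the stated $\zeta$.
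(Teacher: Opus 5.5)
Your argument is the paper's argument. The bracketing failure forces $\mathcal{P}_t(\mathcal{T}^i)\ge\Phi(\beta)\ge\psi$, so $S(\mathcal{P}_t(\mathcal{T}^i))\le S(\Phi(\beta))$, and the volume contracts geometrically from at most $V_{max}$ until it drops below the discard threshold $V_{init}/2$. You are more careful than the paper on the side conditions: $a>0$, $\Bar{\sigma}_{t-1}^i>0$, and the equality case $\beta=\Phi^{-1}(\psi)$. In that case $S$ can equal $1$ and the denominator of $\zeta$ vanishes, so the paper's claim $S(\mathcal{P}_t(\mathcal{T}^i))<1$ really does need $\Phi(\beta)>\psi$.

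The question you leave open in Step 3 should be settled against the stated constant, not by looking for a different convention.

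\emph{The exponent is not the issue.} The denominator of $\zeta$ is exactly $\log S(\Phi(\beta))=\log\rho$, so it matches your $\rho$.

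\emph{The numerator is.} The paper's proof does exactly your count, from $V_{max}$ down to the discard threshold. It then writes the numerator as $\log(V_{init})-2\log(V_{max})$ without derivation, which is most plausibly a slip for $\log(V_{init})-\log(2V_{max})$.

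\emph{The stated $\zeta$ is genuinely too small.} The domain is normalized to $[0,1]^d$ and $V_{max}\le 10^{-1}$ in every experiment, so your criterion $V_{max}\ge 2$ never holds. Take the paper's defaults $a=8$, $b=6$ (so $\psi=0.75$) and $\beta=1.96$; then $\rho\approx 0.284$. For Levy10 ($V_{init}=10^{-5}$, $V_{max}=10^{-1}$) this gives $\zeta\approx 5.5$. Now consider a region that enters the failing streak at $V_{max}$ with $\Bar{l}_{t-1}^i=h$, so its penalty is exactly $\Phi(\beta)$. After $6$ shrinks its volume is still about $5.2\times 10^{-5}$, roughly ten times $V_{init}/2$, and it is discarded only after the $8$th.

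So the lemma as printed is false in the paper's own parameter regime. Your fallback, $\lceil\log(V_{init}/(2V_{max}))/\log\rho\rceil$, is the correct conclusion of this argument, and it is what the paper's proof actually establishes.
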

Since new TRs will be initialized, \(n\) does not get stuck.
Suppose that when TRLSE terminates at iteration \(T\), it has initialized \(N\) TRs.
The accuracy of TRLSE can now be estimated by \(N\) and \(T\).
\subsection{Guarantee for Classification Accuracy}
\subsubsection{Accuracy Definition}
We restate the definition for the accuracy of level set classification as follows:
\begin{definition}
    Let \(\hat{\mathcal{H}}\) and \(\hat{\mathcal{L}}\) be the predicted superlevel and sublevel sets of an LSE algorithm \(\mathbb{A}\) for a black-box function \(f\) over the search space \(\mathcal{X}\) such that they are disjoint and that \(\hat{\mathcal{H}}\cup\hat{\mathcal{L}}=\mathcal{X}\). 
    Given a non-negative constant \(\epsilon\), the level set classification of a point \(\mathbf{x}\in\mathcal{X}\) is said to be \(\epsilon\)-accurate (or \(\mathbf{x}\) is classified with \(\epsilon\)-accuracy) if \(h-f(\mathbf{x})\leq\epsilon\;\forall \mathbf{x}\in\hat{\mathcal{H}}\) and \(f(\mathbf{x})-h\leq\epsilon\;\forall \mathbf{x}\in\hat{\mathcal{L}}\).
\end{definition}
The constant \(\epsilon\) is often predefined per application to indicate an acceptable error margin (e.g., if \(h=1.0\), an \(\epsilon\) of 0.01 means that the practitioner accepts classification errors for points whose function values are within \([0.99,1.01]\)). For example, in safety-critical applications, \(\epsilon\) may be set to a very small value to ensure high accuracy, while in less critical applications, a larger value may be acceptable to reduce the evaluation budget.

A point-wise definition of accuracy is a necessary adaptation for the continuous setting, moving beyond the definition by \citet{activelse} for predefined finite sets.
It serves as a fundamental building block from which one can aggregate performance over a finite test set drawn from anywhere inside $\mathcal{X}$.
Furthermore, a guarantee on the continuous space (e.g., "$\mathcal{X}$ is classified with $\epsilon$-accuracy if the error for every point is less than $\epsilon$"), while seemingly stronger, does not provide a clear mechanism for performance analysis on a finite test set, leading to a lack of practical utility.
\subsubsection{Accuracy Guarantee for TRLSE}
To analyze the accuracy convergence of TRLSE, we follow \citet{srinivas} and assume that the unknown black-box function $f: \mathcal{X} \to \mathbb{R}$ is a sample path drawn from a Gaussian Process (GP) prior, denoted as $\mathcal{GP}(\mu_0(\mathbf{x}), k(\mathbf{x}, \mathbf{x}'))$, where $\mu_0(\mathbf{x})$ is the mean function and $k(\mathbf{x}, \mathbf{x}')$ is a positive definite covariance (kernel) function. 
Without loss of generality, we assume the mean function is zero. 

The following result shows the theoretical convergence for the level set accuracy of TRLSE:
\begin{theorem}[Accuracy guarantee of TRLSE]
\label{main_theorem}
Let TRLSE be run with the Straddle heuristic. 
Assume that there exist constants $\kappa_g \in (0, 1]$ such that $a_g(\bar{\mathbf{x}}_n) \ge \kappa_g \max_{\mathbf{x} \in \bar{\mathcal{U}}_t} a_g(\mathbf{x})$ for all $n\geq 1$.
For any $N$ that satisfies the condition $a_g(\bar{\mathbf{x}}_N) \le \frac{1}{N}\sum_{n=1}^{N} a_g(\bar{\mathbf{x}}_n)$, it holds with a probability of at least $\Phi(\beta)$ that the classification of a point $\mathbf{x}$ outside the trust regions ($\mathbf{x} \in \bar{\mathcal{U}}_T$) is $\epsilon$-accurate, where $\epsilon$ satisfies:
\begin{align}
    \epsilon \leq \frac{\beta}{\kappa_g} \sqrt{\frac{C_1 \gamma_N}{N}},\textup{ with }C_1=\frac{2}{\log(1+\sigma^{-2})}.\label{global_accuracy}
\end{align}
\end{theorem}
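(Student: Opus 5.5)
We follow the ActiveLSE / GP-UCB template of \citet{srinivas}, applied only to the global acquisitions. The plan is to (i) reduce $\epsilon$-accuracy of any point outside the trust regions to a bound on its Straddle value, (ii) bound that value by the value at the last global point, and (iii) bound the latter by the average of the global Straddle values. The average is then controlled by the information gain $\gamma_N$ accumulated along the sequence $\bar{\mathbf{x}}_1,\dots,\bar{\mathbf{x}}_N$.

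\textbf{Step 1 (confidence event).} Since $f$ is a GP sample path, for a fixed $\mathbf{x}$ the posterior satisfies $|f(\mathbf{x})-\mu_g(\mathbf{x})|\le\beta\sigma_g(\mathbf{x})$. The two one-sided tail events each have probability at most $1-\Phi(\beta)$. We will only need the side relevant to the predicted class of $\mathbf{x}$, which holds with probability at least $\Phi(\beta)$.

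\textbf{Step 2 (accuracy from Straddle).} Suppose $\mathbf{x}\in\bar{\mathcal{U}}_T$ is classified by $\mu_g$.
\begin{itemize}
\item If $\mu_g(\mathbf{x})\ge h$, then on the event of Step 1
\[
h-f(\mathbf{x})\le h-\mu_g(\mathbf{x})+\beta\sigma_g(\mathbf{x})=\beta\sigma_g(\mathbf{x})-|\mu_g(\mathbf{x})-h|=a_g(\mathbf{x}).
\]
\item The case $\mu_g(\mathbf{x})<h$ is symmetric and gives $f(\mathbf{x})-h\le a_g(\mathbf{x})$.
\end{itemize}
Hence $\epsilon$-accuracy holds with $\epsilon=\max(a_g(\mathbf{x}),0)\le\max_{\mathbf{x}'\in\bar{\mathcal{U}}_T}a_g(\mathbf{x}')$. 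The approximate-maximizer assumption then gives
\[
\max_{\bar{\mathcal{U}}}a_g\le a_g(\bar{\mathbf{x}}_N)/\kappa_g .
\]
We also use that posterior variances only shrink as data is added, so $a_g$ evaluated with the final $\mathcal{GP}_g$ is bounded by its value at acquisition time. Strictly, $|\mu_g-h|$ may move, so I would instead bound $a_g\le\beta\sigma_g$ at the final time directly and push monotonicity through $\sigma$ only. The intended reading is that $a_g(\bar{\mathbf{x}}_N)$ controls the final state.

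\textbf{Step 3 (averaging).} By the hypothesis on $N$,
\[
a_g(\bar{\mathbf{x}}_N)\le\frac1N\sum_n a_g(\bar{\mathbf{x}}_n)\le\frac{\beta}{N}\sum_n\sigma_{g,n-1}(\bar{\mathbf{x}}_n),
\]
where $\sigma_{g,n-1}$ is the global posterior just before $\bar{\mathbf{x}}_n$ is evaluated. Cauchy--Schwarz gives
\[
\sum_n\sigma\le\sqrt{N\sum_n\sigma^2}.
\]
Srinivas' Lemma 5.3/5.4 gives $\sum\sigma^2\le C_1\gamma_N$ with $C_1=2/\log(1+\sigma^{-2})$, using $\sigma^2\le\sigma^2\cdot\sigma^{-2}\le C_1\sigma^2\log(1+\sigma^{-2}\sigma^2_{n-1})/2$ and the information-gain identity.

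\textbf{Main obstacle.} The global GP is also updated with the local points $\mathbf{x}_t$ interleaved between global acquisitions. The posterior $\sigma_{g,n-1}$ therefore conditions on more than just $\bar{\mathbf{x}}_1,\dots,\bar{\mathbf{x}}_{n-1}$. I would handle this by monotonicity: conditioning on extra points only lowers variance. Each term is then bounded by the variance given only the previous global points, and the information gain of the global subsequence is at most $\gamma_N$. Combining Steps 2 and 3 yields $\epsilon\le\frac{\beta}{\kappa_g}\sqrt{C_1\gamma_N/N}$.
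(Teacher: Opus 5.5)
Your proposal is correct and is essentially the paper's proof. Step~2 is Lemma~\ref{epsilon_acc} chained with the $\kappa_g$ and running-average conditions, and your variance-monotonicity handling of the interleaved local points is the same fact the paper phrases as submodularity of information gain in Lemma~\ref{lemma:global_acquisition_bound}; also, bounding $a_g\le\beta\sigma_g$ before Cauchy--Schwarz spares you the non-negativity Lemma~\ref{lemma:acquisition_nonnegativity}, which the paper needs in order to square $a_g$. Drop the side remark about pushing monotonicity through $\sigma_g$ alone, because $\beta\sigma_g(\mathbf{x})$ is not controlled by the maximal Straddle value; the paper, like your ``intended reading,'' proves the guarantee on $\bar{\mathcal{U}}$ with $\mathcal{GP}_g$ as they stand when $\bar{\mathbf{x}}_N$ is selected.
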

Assuming the use of common kernels as in Theorem 5 in \citet{srinivas} (i.e. linear, squared exponential, and Matérn kernels), the right hand side of Eq.\ref{global_accuracy} converges to zero as \(N\) increases, which means the accuracy of TRLSE outside the TRs improves over time.
Bounds for the maximum information gain \(\gamma_N\) w.r.t. the number of dimensions \(d\) and the amount of data points are given by \citet{srinivas}.

Theorem \ref{main_theorem} guarantees the accuracy \textit{outside} the TRs (i.e. $\forall\mathbf{x} \in \bar{\mathcal{U}}_T$), not inside them.
In high-dimensional problems, the total volume occupied by the TRs maintained by TRLSE is often insignificant, as can be seen in Appendix \ref{detail_experiment}. 
Deriving a formal accuracy guarantee for points within the trust regions (i.e. for all $\mathbf{x} \in \mathcal{U}_T$) is a significant challenge, as their position, size, and local models change at every iteration.
Given this complexity and the limited practical interest in the accuracy over such insignificant volume inside the TRs, the focus remains on providing a foundational accuracy guarantee for the vast global space (as delivered by Theorem \ref{main_theorem}), which is the most critical measure of the algorithm's performance.
\paragraph{Proof sketch} 
We first need to establish that the acquisition values $a_g(\bar{\mathbf{x}}_n)$ and $a_t(\mathbf{x}_t)$ are always non-negative under a mild assumption that all data points are not yet confidently classified by the algorithm.
With $A_N=\frac{1}{N}\sum_{n=1}^{N} a_g(\bar{\mathbf{x}}_n)$, we can link the sum of the global acquisition values to the sum of the global posterior variances and eventually maximum information gain over sampling any $N$ points using the submodularity property of information gain.
This allows us to upper bound $A_N$ by a constant proportional to $\sqrt{\gamma_N/N}$.
After revealing the connection between the acquisition value and the classification accuracy on the same search space, we can now derive the accuracy outside the trust regions.

\subsubsection{Practical Considerations}
The following lemma formalizes that the condition $a_g(\bar{\mathbf{x}}_N) \le A_N$ in Theorem \ref{main_theorem} occurs infinitely often.
\begin{lemma}\label{lem:l6_infinitely_many_non_aggressive}
There are infinitely many integers $N$ s.t. $a_g(\bar{\mathbf{x}}_N) \le A_N$.
\end{lemma}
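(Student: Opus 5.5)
We argue by contradiction, using only the definition $A_N=\frac{1}{N}\sum_{n=1}^{N} a_g(\bar{\mathbf{x}}_n)$ and the fact (Lemma \ref{first_lemma}) that new trust regions keep being initialized, so the sequence $(a_g(\bar{\mathbf{x}}_n))_{n\ge1}$ is infinite. Write $a_n:=a_g(\bar{\mathbf{x}}_n)$. Suppose only finitely many $N$ satisfy $a_N\le A_N$. Then there is an $N_0$ such that $a_N>A_N$ for every $N> N_0$.

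First we rewrite the condition. Since $A_N=\frac{N-1}{N}A_{N-1}+\frac{1}{N}a_N$, the inequality $a_N>A_N$ is equivalent to $\frac{N-1}{N}a_N>\frac{N-1}{N}A_{N-1}$, that is, $a_N>A_{N-1}$ for $N\ge2$. Substituting this back into the recursion gives $A_N-A_{N-1}=\frac{1}{N}(a_N-A_{N-1})>0$. Hence for all $N>N_0$ the running averages $A_N$ are strictly increasing, and every new term exceeds the previous running average. Note that $a_N>A_N$ and $a_N>A_{N-1}$ are equivalent, so this step is purely algebraic.

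Next we need a contradiction from boundedness. The Straddle value satisfies $a_g(\mathbf{x})=\beta\sigma_g(\mathbf{x})-|\mu_g(\mathbf{x})-h|\le\beta\sigma_g(\mathbf{x})\le\beta\sqrt{k(\mathbf{x},\mathbf{x})}$. This is bounded above, for instance by $\beta$ under a normalized kernel. Monotonicity plus boundedness alone only give convergence of $A_N$, not a contradiction, so we must use a decay property. The proof sketch of Theorem \ref{main_theorem} supplies one: the bound via information gain shows $A_N\le C\sqrt{\gamma_N/N}$, with $C$ proportional to $\beta\sqrt{C_1}$. That argument links $\sum_n a_n$ to $\sum_n\beta\sigma_g(\bar{\mathbf{x}}_n)$ and then uses Cauchy--Schwarz together with $\sum_n\sigma^2\le C_1\gamma_N$. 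It does not use the "non-aggressive" condition, so it applies to every $N$. For the usual kernels $\gamma_N/N\to0$, hence $A_N\to0$. But for $N>N_0$ the sequence $A_N$ is strictly increasing from $A_{N_0+1}$. If $A_{N_0+1}>0$, we immediately contradict $A_N\to0$.

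The main obstacle is the edge case $A_{N_0+1}\le0$. The proof sketch also asserts that $a_n\ge0$, under the mild assumption that the sampled points are not yet confidently classified. Granting this, all $A_N\ge0$. Strict increase after $N_0$ then forces $A_N>0$ eventually, which again contradicts $A_N\to0$. If we prefer to avoid that assumption, we can argue directly that $a_N>A_{N-1}$ for all $N>N_0$ gives, by induction, $A_N\ge A_{N_0+1}$. Combined with $A_N\to0$, this yields $A_{N_0+1}\le0$. Moreover, $\limsup a_N\ge\lim A_N=0$ together with $A_N\uparrow0$ forces each $a_N>A_{N-1}$ while the average tends to $0$. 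In that case we would have to show that some $a_N$ equals its running average or falls below it. This is where the nonnegativity assumption is genuinely needed, and we will invoke it.

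In either case the assumption that only finitely many $N$ satisfy $a_N\le A_N$ fails, so infinitely many $N$ satisfy $a_g(\bar{\mathbf{x}}_N)\le A_N$.
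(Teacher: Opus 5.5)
Your proposal is correct and follows essentially the same route as the paper. Assume only finitely many $N$ qualify, use the recursion $NA_N=(N-1)A_{N-1}+a_g(\bar{\mathbf{x}}_N)$ to get that $A_N$ is eventually strictly increasing, and contradict $A_N\to 0$, which comes from $0\le A_N\le\sqrt{\beta^2C_1\gamma_N/N}$ via non-negativity (Lemma~\ref{lemma:acquisition_nonnegativity}). You are right that non-negativity is genuinely needed: with $a_1=-1$ and $a_N=0$ for $N\ge 2$, one gets $a_N>A_N=-1/N$ for all $N\ge 2$, so you can drop the detour in your fourth paragraph and invoke that lemma directly.
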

Lemma \ref{lem:l6_infinitely_many_non_aggressive} means this condition is not a restrictive or rare event, which is crucial for the convergence of the global accuracy.
In practice, with the magnitude of $\sigma_g(\mathbf{x})$ decreasing with more data points, the global acquisition value $a_g(\bar{\mathbf{x}}_N)$ will also decrease, which means that it will be less than or equal to its running average.
Therefore, while the theorem is stated conditionally, the convergence of the global accuracy is primarily dictated by the rate shown in Eq.\ref{global_accuracy}.

Other considerations relate to the optimizer's behavior in a continuous setting and how \(\beta\) affects the balance between sample efficiency and accuracy.
Theorem \ref{main_theorem} explicitly incorporates the optimizer's efficiency via a constant $\kappa_g$. 
Previous works often assume perfect AF optimization as a result of operating on a finite set.
Incorporating $\kappa_g$ makes the result more realistic with a formal acknowledgement of the practical difficulty of optimizing the AF directly impacts the achievable classification accuracy. 
A less efficient optimizer (smaller $\kappa_g$) leads to a worse classification accuracy, correctly reflecting real-world performance.
The constant \(\beta\) was introduced earlier in Algorithm \ref{alg:one} as the confidence parameter.
For the context of Theorem 4.3, demanding a higher confidence level (i.e., a larger \(\beta\)) would require more samples to achieve the same accuracy. 

Though Theorem \ref{main_theorem} is shown with the Straddle heuristic as both global and local AFs, one can use other AFs for TRLSE in practice.
We show the experimental results using other AFs in Appendix \ref{different_acq}.
In addition, although our focus is on high-dimensional spaces with the assumption of Gaussian noise, it would be interesting to extend this work and our theoretical analysis to non-Gaussian ones.
For sub-Gaussian noise assumptions, the results in Theorem \ref{main_theorem} still hold with minor modifications involving the replacement of the standard Gaussian concentration inequalities with Hoeffding-type bounds \citep{10.5555/3305381.3305469}.
Heavy-tailed noise, however, necessitates specialized design choices for GP robustness against outliers; consequently, theoretical guarantees depend on the particular robust GP formulation used.
\subsection{A Conjectured Tighter Bound}
While the preceding analysis provides a formal guarantee for TRLSE's global accuracy, its convergence rate of $O(\sqrt{\gamma_N/N})$ only accounts for the N points sampled by the global AF. 
This is a notable limitation, as $\mathcal{GP}_g$ is updated using the entire evaluation budget of $N+T$ points.
As discussed in Section \ref{sec:trust_region_reinitialization}, TRLSE's accuracy should intuitively reflect the total number of samples. 
Based on this observation, we conjecture that a tighter bound dependent on the total sample size exists. 
However, a formal proof for such a bound is highly challenging without first making a plausible, yet unproven, assumption about the behavior of the global AF at locally sampled points, which we will now discuss.
\begin{assumption}[Non-negativity of Global Acquisition Values at Local Points]\label{assumption:non_negativity_globallocal}
    The global acquisition value at $\mathbf{x}_t$, calculated using $\mathcal{GP}_g$ just before evaluating $\mathbf{x}_t$, is non-negative.
    Specifically:
    \begin{align*}
        a_g(\mathbf{x}_t) = \beta\sigma_g(\mathbf{x}_t) - |\mu_g(\mathbf{x}_t) - h| \geq 0
    \end{align*}
\end{assumption}
This assumption is grounded in the interplay between the local and global models.
$\mathcal{GP}_{t-1}^i$ is a specialized and accurate model for its TR $\mathcal{T}^i$, trained on the most relevant local data.
$\mathbf{x}_t$ is selected from $\mathcal{T}^i$ because of its high local acquisition value, meaning $\mathcal{GP}_{t-1}^i$ is uncertain about the point's classification. 
Given that this specialized local model is uncertain, it is improbable that the more general global model $\mathcal{GP}_g$ would be simultaneously highly confident in its classification of this point, which is necessary for $a_g(\mathbf{x}_t)$ to be negative. 
While such disagreements are possible, they become less frequent with more data.
\begin{figure}
    \centering
    \includegraphics[width=0.6\linewidth]{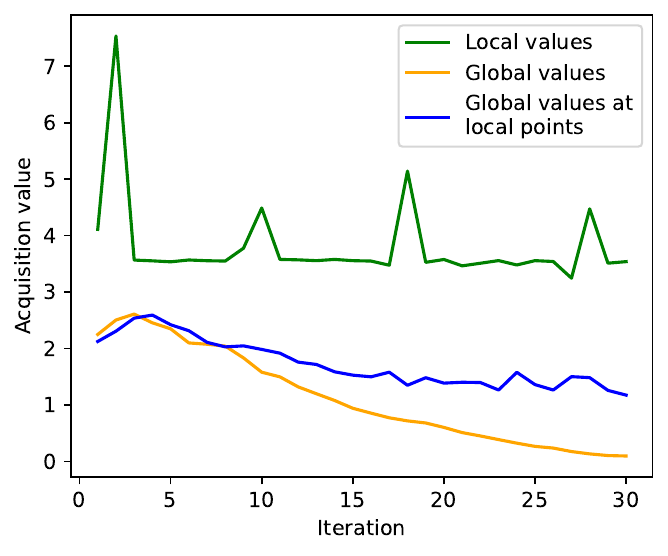} 
    \caption{An illustration of Assumption \ref{assumption:non_negativity_globallocal} on AA33D. The global acquisition values at the local points remain non-negative. Global values constantly decrease with more points sampled as shown in the proof for Theorem \ref{main_theorem}, while local values often spike when new uncertain areas are explored.}
    \label{fig:conjecture_illustration}
\end{figure}

With this assumption in place, we can explore the implications for a tighter convergence rate of TRLSE through a similar proof strategy.
Let $A^{\prime}_{N+T}$ be the running average over the global acquisition values of all $N+T$ sampled points.
$A^{\prime}_{N+T}$ can be upper bounded by a constant proportional to $\sqrt{\gamma_{N+T}/(N+T)}$ when we also have that $a_g(\mathbf{x}_t)\geq 0, \forall t$.
This tighter average can be connected to the global accuracy where two conditions are met: 
(1) the most recently sampled point is a global point,  
$\bar{\mathbf{x}}_N$, and 
(2) its acquisition value is less than or equal to the current running average, $a_g(\bar{\mathbf{x}}_N) \leq A^{\prime}_{N+T}$.
The first condition simply specifies the point in the algorithm's execution that we are analyzing. 
The second one is guaranteed to be met infinitely often, following a similar logic to that in Lemma \ref{lem:l6_infinitely_many_non_aggressive}. 
The global accuracy is then bounded by:
\begin{align}
    \epsilon \leq \frac{\beta}{\kappa_g} \sqrt{\frac{C_1 \gamma_{N+T}}{N+T}}.\label{conjectured_global_accuracy}
\end{align}
This tighter bound hinges on an assumption that is highly plausible in practice as seen in Figure \ref{fig:conjecture_illustration}.
This bound depends on the total evaluation budget $N+T$ which more accurately reflects the algorithm's operation. 
It reveals a key synergy in TRLSE's design: local exploration, driven by a separate objective, accelerates the convergence of the global model, providing a more complete theoretical explanation for the sample efficiency observed in our experiments.

%% file: 5_experiments.tex
\section{Experiments}
We evaluate TRLSE on 8 synthetic and real-world problems, comparing it against: 1) Random sampling (Random), 2) The Straddle heuristic \citep{straddle} (STR), and 3) HLSE \citep{ha2021high}.
Random sampling and STR use a single GP as their surrogate model, and we use ExpHLSE \citep{ha2021high} as the implementation of HLSE to match our problem setting of known threshold.
These baselines provide a comprehensive evaluation against naive, popular, and state-of-the-art methods.
STR is a commonly used method and has a comparable performance to other popular methods including ActiveLSE and TruVar \citep{rmile}, and its inherent adaptability to continuous spaces makes it a highly relevant benchmark for our setting.
Since STR is also highly prone to over-exploration in high-dimensional spaces, it is an ideal test for TRLSE's ability to mitigate this issue, especially when it is used as both the global and local AFs in TRLSE.
We also consider other AF choices for TRLSE including C2LSE and Thompson sampling and make explicit comparison between these methods and the corresponding TRLSE version in Appendix \ref{different_acq}.
In addition, a key focus of the comparison with HLSE is to evaluate which algorithm achieves a better trade-off between classification accuracy and computational efficiency. 
While HLSE is a state-of-the-art method, its high computational demands and hardware requirements are well-documented, so this comparison is crucial for demonstrating that TRLSE offers a more practical and scalable solution without sacrificing performance.

All experiments use search spaces scaled to \([0,1]^d\).
Methods with GPs employ a Mat\'{e}rn kernel with hyperparameter priors adjusted according to \citet{hvarfner2024vanillabayesianoptimizationperforms}.
Results using other kernels are shown in Appendix \ref{different_kernel}.
Descriptions of the benchmarks, along with their corresponding hyperparameters, are presented in Appendix \ref{detail_experiment}. 
To remove the bias of zero-mean prior by GP in classification, we standardize the function values and shift their mean to the threshold. 
We show results with TRLSE using STR as both global and local AFs.
We use \(S_1(u)=2/(1+\textup{exp(8u-6)})\) for \(S(\cdot)\) and ablate this choice in Section \ref{S_choices}.
The digit suffix in the name of each benchmark shows its dimensionality.
We report F1-scores as the measure of classification accuracy.
Each method is repeated 10 times, where the initializations are similar among all methods for each repetition.
The median and interquartile range among 10 repetitions are plotted.
Code to reproduce all experiments is available at \url{https://github.com/giang-n-ngo/TRLSE}.
\subsection{Main Results}\label{main_results}
\subsubsection{Classification performance}
\begin{figure*}[ht]
    \subfigure[Levy10]{\includegraphics[scale=0.372]{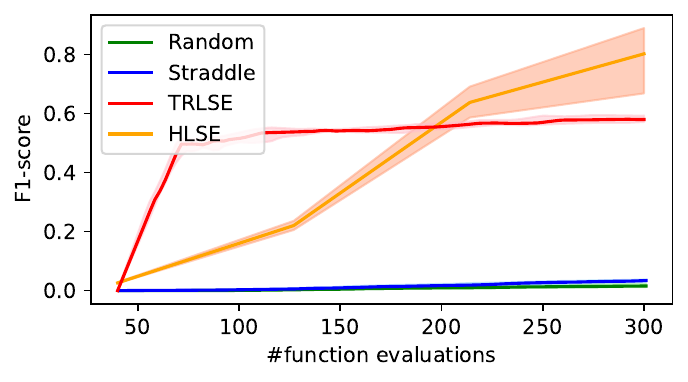}}
    \subfigure[AA33]{\includegraphics[scale=0.372]{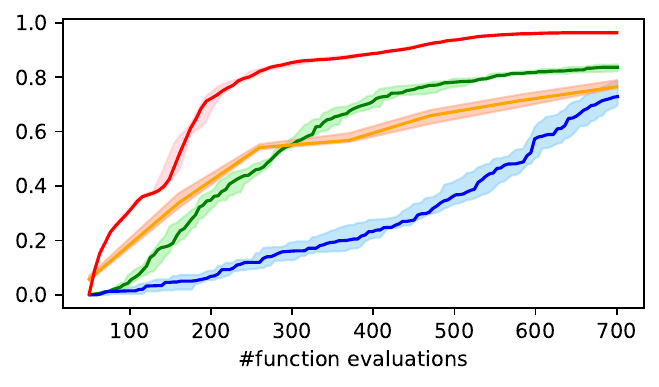}}
    \subfigure[Mazda74]{\includegraphics[scale=0.372]{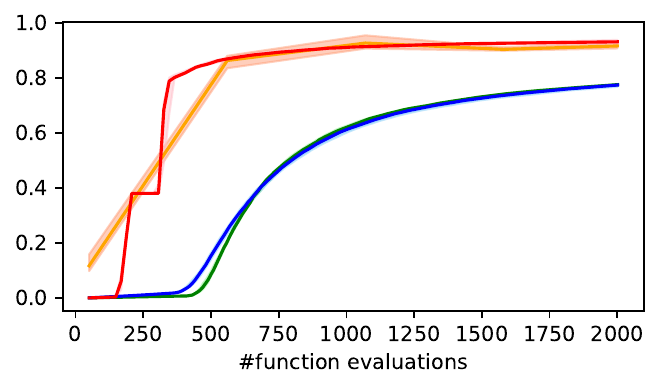}}
    \subfigure[Levy100]{\includegraphics[scale=0.372]{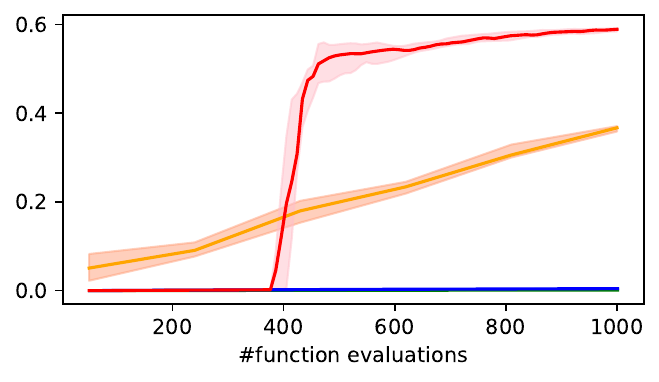}}\\
    \subfigure[Vehicle124]{\includegraphics[scale=0.372]{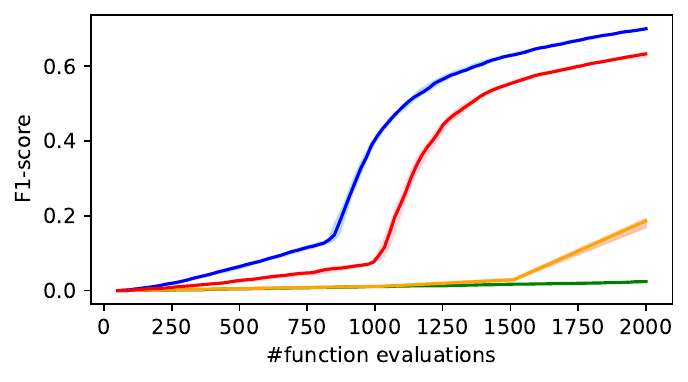}}
    \subfigure[Ackley200]{\includegraphics[scale=0.372]{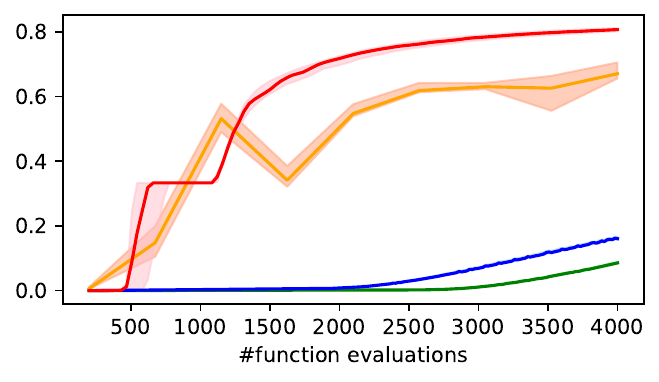}}
    \subfigure[Trid1000]{\includegraphics[scale=0.372]{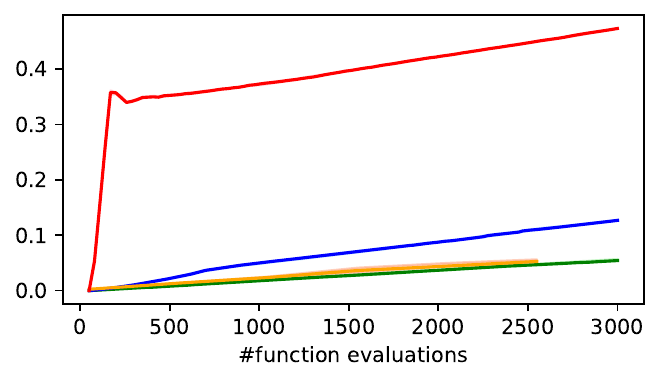}}
    \subfigure[Rosenbrock1000]{\includegraphics[scale=0.372]{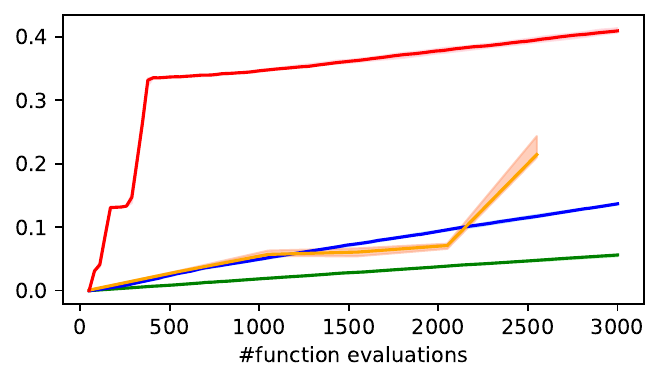}}
    \hspace*{\fill}
    \caption{Results on synthetic functions and real-world benchmark problems}
    \label{fig:main}
\end{figure*}
Figure \ref{fig:main} shows that TRLSE is at least as good as the baselines on most problems. 
TRLSE's accuracy also improves consistently throughout the process while HLSE's accuracy may fluctuate, as seen for Ackley200 (and also as reported by \citet{ha2021high}). 
HLSE also experiences out-of-memory issues and cannot complete the experiments on 1000D problems.
STR, however, struggles as a result of almost zero recall on both experiments. 
In truly high-dimensional spaces, heuristic AFs like STR tend to overly explore due to the dependence on the posterior variance which increases greatly with more dimensions. 
While STR in this case, with the help of a reduced complexity assumption using adjusted kernel hyperparameters similar to \citet{hvarfner2024vanillabayesianoptimizationperforms}, may not necessarily explore only the border of the search space (which often comes with the largest posterior variance), it becomes similar to random sampling on many experiments because the acquisition values become almost constant as a result of large posterior variance.
TRLSE reduces the effect of over-exploration by gradually updating the TRs closer to the threshold boundary and only exploring a new region if a current TR is not promising.
\subsubsection{Runtime}\label{runtime}
\begin{table}[h]
    \centering
    \caption{Average runtimes in minutes for experiments in Figure \ref{fig:main}. * indicates out-of-memory experiments.}
    \label{tab:runtime}
    \begin{tabular}{lllll}
        \toprule
        Problem & Random & STR & HLSE & TRLSE\\
        \midrule
        Levy10 & 0.1 & 0.1 & 368.2 & 4.2 \\
        AA33 & 1.0 & 1.7 & 661.4 & 11.5 \\
        Mazda74 & 6.1 & 9.8 & 599.8 & 24.4 \\
        Levy100 & 1.9 & 1.9 & 1128.1 & 24.3 \\
        Vehicle124 & 4.9 & 5.5 & 673.2 & 58.7 \\
        Ackley200 & 18.0 & 18.6 & 1568.9 & 992.2 \\
        Trid1000 & 6.3 & 8.2 & 2200.9* & 433.9 \\
        Rosen...1000 & 6.1 & 6.9 & 3867.2* & 450.7 \\
        \bottomrule
    \end{tabular}
\end{table}
Table \ref{tab:runtime} lists the average runtime for the experiments in Figure \ref{fig:main}. 
HLSE's much higher runtimes and crashes at higher dimensionalities due to memory issues are expected, given its demanding hyperparameter tuning. 
Note that HLSE ran on three Tesla V100 GPUs while other methods ran on only one. 
The runtimes of TRLSE are increasing mostly due to a combination of increasing dimensionality (more computation and more test points) and increasing number of TRs. 
Because the implementation use sequential region updates, TRLSE's runtime on Ackley200 (with 200 TRs) is much higher than those of 1000D problems (which were run with 50 TRs as explained in Appendix \ref{detail_experiment}), which means the increased number of TRs outweighs the increased dimensionality in this case.
\subsection{Ablation Studies}
Due to space constraints, we present the ablation studies in Appendix \ref{ablation_study}, with a summary of the findings below:
\begin{itemize}
    \item Setting the initial TR volume \(V_{init}\) too big or too small can reduce the performance of TRLSE, but this is only noticeable in truly high-dimensional spaces (e.g., 100D). TRLSE achieves a good local-global balance with \(V_{init}\approx0.5^d\) in practice. (Appendix \ref{ablation_v_init})
    \item Extremely small \(R\) can hurt the accuracy of TRLSE, as it limits the pool for local acquisition optimization. However, large \(R\) yields minimal benefits, as the fixed number of points selected at the local level constrains performance improvements. (Appendix \ref{ablation_R})
    \item Using random new TRs instead of the global AF for exploration significantly degrades performance in truly high-dimensional problems, highlighting the importance of informed re-initialization. (Appendix \ref{ablation_random_new_tr})
    \item Using local GPs also contributes to improved performance, especially in higher dimensions, by providing more accurate local modeling for acquisition optimization and TR updates. (Appendix \ref{ablation_one_gp})
    \item No volume update reduces performance on both 10D and 100D problems, demonstrating the necessity of adjusting TR volumes based on their proximity to the threshold boundary. In addition, TRLSE is robust to different choices of \(S(\cdot)\), as long as it is a monotonically decreasing function that maps large penalties to small adjustment factors and vice versa. Both too large and too small adjustment factors can degrade performance. (Appendix \ref{S_choices})
\end{itemize}

%% file: 6_conclusion.tex
\section{Conclusion}
This work addresses high-dimensional level set estimation using multiple local regions to capture threshold boundary.
Our algorithm, TRLSE, updates regions based on their boundary centering while using local acquisition to improve threshold boundary modeling.
Additionally, a global acquisition function spawns new regions to replace underperforming ones, ensuring global coverage.
The algorithm's theoretical classification accuracy is rigorously analyzed, ensuring an accuracy based on the number of points sampled locally and the number of regions initialized. 
Extensive experiments across multiple synthetic and real-world problems demonstrate TRLSE's superior performance against the baselines with satisfactory performance on up to 1000 dimensions.

%% file: 7_appendix_proof.tex
\section{Proof for Theoretical Results}\label{proof}
\subsection{Proof of Lemma \ref{first_lemma}}
\begin{proof}
    When \(\Bar{l}_{t-1}^i\geq h\), we know that \(\mathcal{P}_t(\mathcal{T}^i)=\Phi(|\frac{\Bar{l}_{t-1}^i+\Bar{u}_{t-1}^i-2h}{2\Bar{\sigma}_{t-1}^i}|)\geq \Phi(\beta)\geq \psi\) if the confidence level \(\beta\) is set to be higher than \(\Phi^{-1}(\psi)\). 
    It follows that the volume of \(\mathcal{T}^i\) will reduce at iteration \(t\) since \(S(\mathcal{P}_t(\mathcal{T}^i))<1\) if \(S(u)=2/(1+\textup{exp}(au-b))\) with \(b=\psi a\).

    Given that the maximum and minimum volumes possible for a TR are \(V_{max}\) and \(V_{min}/2\), the maximum number of consecutive iterations that a region has its lower-confidence bound \(\Bar{l}_{t-1}^i\geq h\) is:
    \begin{align}
        \zeta = \frac{\log(V_{init})-2\log(V_{max})}{\log(2)-\log(1+\exp(-b+a\Phi(\beta)))}
    \end{align}
    The proof when \(\Bar{u}_{t-1}^i\leq h\) is similar.
\end{proof}
\subsection{Proof of Theorem \ref{main_theorem}}
A proof for Theorem \ref{main_theorem} requires several auxiliary results. 
The first assumption is a necessary condition for the algorithm to continue sampling points, ensuring that there is always at least one point in the search space that remains uncertain with respect to the threshold $h$.
\begin{assumption}\label{assumption:non-terminated}
    At any iteration $t$ where the algorithm has not yet terminated, there exists at least one point whose classification, given the current surrogate model, remains uncertain with respect to the threshold $h$. 
    Specifically:
    \begin{itemize}
        \item Global Uncertainty: The set of points within $\bar{\mathcal{U}}_t$ whose confidence intervals from $\mathcal{GP}_g$ contains the threshold $h$ is non-empty. 
        Formally: $$\{{\mathbf{x} \in \bar{\mathcal{U}}_t \mid \mu_g(\mathbf{x}) - \beta\sigma_g(\mathbf{x}) \leq h \leq \mu_g(\mathbf{x}) + \beta\sigma_g(\mathbf{x}) }\} \neq \emptyset.$$
        \item Local Uncertainty: The set of points within $\mathcal{U}_t$ whose confidence intervals from their corresponding local Gaussian Process ($\mathcal{GP}_t^i$) contains the threshold $h$ is non-empty. 
        Formally: $$ \{\mathbf{x} \in \mathcal{U}_t \mid \exists i \in \{1, \ldots, R\} \text{ s.t. } \mathbf{x} \in \mathcal{T}^i \text{ and } \mu_i(\mathbf{x}) - \beta\sigma_i(\mathbf{x}) \leq h \leq \mu_i(\mathbf{x}) + \beta\sigma_i(\mathbf{x}) \} \neq \emptyset.$$
    \end{itemize}
\end{assumption}

Assumption \ref{assumption:non-terminated} pertains to the continued informativeness that drives the active learning process for acquiring new data points, rather than the final classification assignment.
TRLSE actively seeks out points where there is still significant uncertainty about the function's true value relative to the threshold, specifically where the confidence interval \([\mu(\mathbf{x}) - \beta\sigma(\mathbf{x}), \mu(\mathbf{x}) + \beta\sigma(\mathbf{x})]\) still contains the threshold $h$.
Therefore, Assumption \ref{assumption:non-terminated} is a practical statement that as long as the algorithm has not fully achieved its goal of confidently classifying the entire search space, there remains at least one point that is still \textit{uncertain} enough to warrant further sampling. 
In the setting with a discrete search space, this is equivalent to the termination criterion in ActiveLSE \citep{activelse} which stops the algorithm when all points are classified with sufficient confidence.
In high-dimensional continuous spaces, where fully resolving uncertainty across all points with a limited budget is highly improbable, this assumption remains valid until the problem is effectively solved and the model is sufficiently confident everywhere.
This assumption would be violated only in the extreme case where the threshold $h$ lies entirely outside the range of observed function values, causing all acquisition functions to yield negative values and rendering further active sampling trivial for TRLSE's objectives, but we do not consider such extreme scenarios in this theoretical analysis.

\begin{lemma}[Non-negativity of Acquisition Values]\label{lemma:acquisition_nonnegativity}
    It holds that $a_g(\bar{\mathbf{x}}_n) \geq 0$ and $a_l(\mathbf{x}_t) \geq 0$ for all $t \geq 1, n \geq 1$.
\end{lemma}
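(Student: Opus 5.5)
The plan is to derive both inequalities directly from Assumption \ref{assumption:non-terminated}, using the defining property of the argmax in Eq.~\ref{acq_global} and Eq.~\ref{acq_local}. The key observation is that the Straddle acquisition in Eq.~\ref{eqn:straddle} is non-negative exactly when the confidence interval contains the threshold. Since $\beta\sigma_g(\mathbf{x})\ge 0$, we have the equivalence
\begin{align*}
a_g(\mathbf{x})\ge 0 \iff |\mu_g(\mathbf{x})-h|\le\beta\sigma_g(\mathbf{x}) \iff \mu_g(\mathbf{x})-\beta\sigma_g(\mathbf{x})\le h\le \mu_g(\mathbf{x})+\beta\sigma_g(\mathbf{x}).
\end{align*}
The same equivalence holds for $a_l$ on each $\mathcal{T}^i$ with $\mu_t^i,\sigma_t^i$ in place of $\mu_g,\sigma_g$. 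The set in the Global Uncertainty part of Assumption \ref{assumption:non-terminated} is therefore precisely $\{\mathbf{x}\in\bar{\mathcal{U}}_t: a_g(\mathbf{x})\ge 0\}$, and likewise for the local set.

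For the global part, fix $n$ and the iteration $t$ at which $\bar{\mathbf{x}}_n$ is selected; at that point the algorithm has not terminated. By Global Uncertainty there is some $\mathbf{z}\in\bar{\mathcal{U}}_t$ with $a_g(\mathbf{z})\ge 0$. Since $\bar{\mathbf{x}}_n$ maximizes $a_g$ over $\bar{\mathcal{U}}_t$, we get $a_g(\bar{\mathbf{x}}_n)\ge a_g(\mathbf{z})\ge 0$.

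For the local part, Local Uncertainty gives an index $i$ and a point $\mathbf{z}\in\mathcal{T}^i$ with $a_l(\mathbf{z})\ge 0$ computed from $\mathcal{GP}_t^i$. Then $a_l^i=\max_{\mathbf{x}\in\mathcal{T}^i}a_l(\mathbf{x})\ge 0$. Because $\mathbf{x}_t$ attains $\max_j a_l^j\ge a_l^i$, it follows that $a_l(\mathbf{x}_t)\ge 0$, evaluated with the local GP of the region it was drawn from.

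The initial $R$ centroids are sampled uniformly rather than by $a_g$. I will either take the indexing of $\bar{\mathbf{x}}_n$ to refer only to AF-selected centroids, or note that the statement concerns points chosen by Eq.~\ref{acq_global}.

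The main obstacle is bookkeeping rather than mathematics: the GP and the region set in the assumption must match exactly those used when the argmax is taken. In Algorithm \ref{alg:one}, $\mathcal{GP}_g$ and $\bar{\mathcal{U}}_t$ change within an iteration as several regions are replaced sequentially, and the local GPs are refit after the centroid and volume updates. I will state that Assumption \ref{assumption:non-terminated} is applied at each such acquisition instant, with the then-current models. I will also point out that the lemma is about exact maximizers. If only $\kappa_g$-approximate maximizers are used, $a_g(\bar{\mathbf{x}}_n)\ge\kappa_g\max a_g\ge 0$ still holds, because the maximum is non-negative and $\kappa_g>0$. So the conclusion carries over to the setting of Theorem \ref{main_theorem}.
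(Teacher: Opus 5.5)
Your proposal is correct and follows the paper's proof essentially verbatim. Assumption~\ref{assumption:non-terminated} supplies a point whose confidence interval contains $h$, hence a point with non-negative Straddle value, and the argmax property of $\bar{\mathbf{x}}_n$ (respectively $\mathbf{x}_t$) transfers this to the selected point. Your extra bookkeeping remarks are sound refinements that the paper leaves implicit: excluding the uniformly sampled initial centroids, applying the assumption at each acquisition instant with the then-current models, and noting that $\kappa_g$-approximate maximizers still satisfy $a_g(\bar{\mathbf{x}}_n)\ge\kappa_g\max a_g\ge 0$.
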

\begin{proof}
    According to Assumption \ref{assumption:non-terminated}, at any non-terminal iteration of the algorithm, $\bar{\mathcal{U}}_t$ is guaranteed to contain at least one point $\mathbf{x}$ s.t. $\mu_g(\mathbf{x}) - \beta\sigma_g(\mathbf{x}) \leq h \leq \mu_g(\mathbf{x}) + \beta\sigma_g(\mathbf{x})$.
    By definition, this means: $|\mu_g(\mathbf{x})-h| \leq \beta\sigma_g(\mathbf{x})$, so we can see that for this point $\mathbf{x}$, the acquisition value is non-negative: $a_g(\mathbf{x})=\beta\sigma_g(\mathbf{x})-|\mu_g(\mathbf{x})-h|\geq 0$.
    The algorithm selects the next point to evaluate, $\bar{\mathbf{x}}_n$, by maximizing its respective acquisition function over its domain (i.e. $\bar{\mathbf{x}}_n=\underset{\mathbf{x}\in\bar{U}_t}{\text{argmax }}a_g(\mathbf{x})$), so $a_g(\bar{\mathbf{x}}_n) = \underset{\mathbf{x} \in \bar{U}_t}{\text{argmax }}a_g(\mathbf{x}) \geq a_g(\mathbf{x}) \geq 0$.
    The same reasoning applies to the local acquisition function, $a_l(\mathbf{x}_t)$, thus proving that both global and local acquisition values are non-negative.
\end{proof}

\begin{lemma}[Upper bound the average global acquisition value]\label{lemma:global_acquisition_bound}
    It holds that:
    \begin{equation}
        \frac{1}{N}\sum_{n=1}^{N} a_g(\bar{\mathbf{x}}_n) \leq \sqrt{\frac{\beta^2C_1\gamma_N}{N}}
    \end{equation}
    where $\gamma_N$ is the maximum information gain over all possible sets of $N$ noisy observations.
\end{lemma}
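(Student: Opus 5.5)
The plan is to follow the GP-UCB style argument of \citet{srinivas}: bound each acquisition value by the posterior standard deviation at the chosen point, then control the sum of posterior variances by the information gain.

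\textbf{Step 1 (per-term bound).} By Lemma \ref{lemma:acquisition_nonnegativity}, $a_g(\bar{\mathbf{x}}_n)\geq 0$. Since $|\mu_g(\bar{\mathbf{x}}_n)-h|\geq 0$, we also have
\[
0\leq a_g(\bar{\mathbf{x}}_n)=\beta\sigma_g(\bar{\mathbf{x}}_n)-|\mu_g(\bar{\mathbf{x}}_n)-h|\leq \beta\sigma_{g,n-1}(\bar{\mathbf{x}}_n).
\]
Here $\sigma_{g,n-1}$ denotes the global posterior standard deviation just before $\bar{\mathbf{x}}_n$ is evaluated.

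\textbf{Step 2 (Cauchy--Schwarz).} Apply Cauchy--Schwarz to the sum:
\[
\sum_{n=1}^N a_g(\bar{\mathbf{x}}_n)\leq \beta\sqrt{N\sum_{n=1}^N\sigma_{g,n-1}^2(\bar{\mathbf{x}}_n)}.
\]
It therefore suffices to show $\sum_n\sigma_{g,n-1}^2(\bar{\mathbf{x}}_n)\leq C_1\gamma_N$.

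\textbf{Step 3 (variance sum versus information gain).} Use the elementary inequality $s\leq \frac{\sigma^{-2}}{\log(1+\sigma^{-2})}\log(1+s)$ for $s=\sigma^{-2}\sigma_{g,n-1}^2(\bar{\mathbf{x}}_n)\in[0,\sigma^{-2}]$, assuming $k(\mathbf{x},\mathbf{x})\leq 1$. This gives
\[
\sigma_{g,n-1}^2(\bar{\mathbf{x}}_n)\leq \frac{1}{\log(1+\sigma^{-2})}\log\bigl(1+\sigma^{-2}\sigma_{g,n-1}^2(\bar{\mathbf{x}}_n)\bigr).
\]
The factor 2 in $C_1$ absorbs the $\tfrac12$ in the information-gain formula.

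\textbf{Step 4 (the main obstacle).} The global GP's conditioning set is not just $\{\bar{\mathbf{x}}_1,\dots,\bar{\mathbf{x}}_{n-1}\}$. It also includes the local points $\mathbf{x}_t$ sampled in between, so the chain rule $\frac12\sum_n\log(1+\sigma^{-2}\sigma_{n-1}^2)=I(\mathbf{y}_{1:N};f)$ does not apply directly.

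The fix is monotonicity of posterior variance under additional conditioning, i.e.\ submodularity of information gain, as the proof sketch indicates. Let $\tilde\sigma_{n-1}$ be the posterior standard deviation conditioned only on the previous global points $\bar{\mathbf{x}}_1,\dots,\bar{\mathbf{x}}_{n-1}$. Conditioning on more data can only reduce variance, so $\sigma_{g,n-1}(\bar{\mathbf{x}}_n)\leq\tilde\sigma_{n-1}(\bar{\mathbf{x}}_n)$. Because $\log(1+\cdot)$ is increasing, the bound of Step 3 can be restated with $\tilde\sigma_{n-1}$. The chain rule then gives
\[
\frac12\sum_n\log\bigl(1+\sigma^{-2}\tilde\sigma_{n-1}^2(\bar{\mathbf{x}}_n)\bigr)=I\bigl(\mathbf{y}_{\bar{\mathbf{x}}_{1:N}};f\bigr)\leq\gamma_N.
\]

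The two points I would check carefully are:
\begin{itemize}
\item The posterior variance is data-independent given the inputs, which justifies using the adaptively chosen points $\bar{\mathbf{x}}_n$.
\item The initial $R$ random centroids can be treated either as extra conditioning, which again only lowers variance, or as counted among the global points.
\end{itemize}

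Combining the steps gives $\sum_n a_g(\bar{\mathbf{x}}_n)\leq\beta\sqrt{NC_1\gamma_N}$. Dividing by $N$ yields the claimed bound.
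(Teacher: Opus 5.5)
Your proposal is correct and follows essentially the same route as the paper. Both use Cauchy--Schwarz, the per-term bound $0\le a_g(\bar{\mathbf{x}}_n)\le\beta\sigma_g(\bar{\mathbf{x}}_n)$, the variance-to-log inequality of Lemma 5.4 in Srinivas et al.\ with $C_1=2/\log(1+\sigma^{-2})$, and a submodularity step bounding the intermingled incremental gains by $I([\bar{y}_n]_{n=1}^N,f)\le\gamma_N$; your variance-monotonicity plus chain-rule argument is just a more explicit, term-by-term version of that last step.
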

\begin{proof}
    By definition, the global acquisition function is defined as:
    \[
        a_g(\mathbf{x}) = \beta\sigma_g(\mathbf{x}) - |\mu_g(\mathbf{x}) - h|
    \]
    We then have that:
    \begin{align} 
        \left[ \frac{1}{N}\sum_{n=1}^{N} a_g(\mathbf{\bar{x}}_n) \right]^2 & \leq \frac{1}{N} \sum_{n=1}^{N} a_g(\mathbf{\bar{x}}_n)^2 \quad \left(\text{Cauchy-Schwarz inequality}\right) \nonumber\\
        & \leq \frac{1}{N} \sum_{n=1}^{N} (\beta\sigma_g(\bar{\mathbf{x}}_n))^2 \quad \left(\text{given } 0 \leq a_g(\bar{\mathbf{x}}_n) \leq \beta\sigma_{n-1}(\bar{\mathbf{x}}_n)\right) \nonumber\\
        & \leq \frac{\beta^2C_1}{N}\sum_{n=1}^{N}\frac{log(1+\sigma^{-2}\sigma_g(\bar{\mathbf{x}}_n)^2)}{2} \quad \left(\textup{see to Lemma 5.4 in \citet{srinivas}}\right)\label{upperbound_by_variance}
    \end{align}
    Next, we will try to upper bound the right hand side of the Ineq.\ref{upperbound_by_variance} using $\gamma_N$.
    With $\sigma_g(\mathbf{x})$ being the posterior standard deviation at $\mathbf{x}$ of $\mathcal{GP}_g$ before $\mathbf{x}$ is evaluated, let us denote $\triangle I(\mathbf{x})=log(1+\sigma^{-2}\sigma_g(\mathbf{x})^2)/2$.
    From Lemma 5.3 in \citet{srinivas}, the information gain from noisy observations $\mathbf{y}_D=[y_t]_{t=1}^T\cup[\bar{y}_n]_{n=1}^N$, which is used to fit $\mathcal{GP}_g$, is given by:
    \begin{align*}
        I(\mathbf{y}_D,f)=\sum_{n=1}^{N}\triangle I(\bar{\mathbf{x}}_n) + \sum_{t=1}^{T}\triangle I(\mathbf{x}_t),
    \end{align*}
    where $\triangle I(\mathbf{x})$ can be seen as the incremental information gain from observing the noisy function evaluation at $\mathbf{x}$.
    With $\sum_{n=1}^{N}\triangle I(\bar{\mathbf{x}}_n)\leq I(\mathbf{y}_D,f)$, we can upper bound the right hand side of Ineq.\ref{upperbound_by_variance} using $\gamma_{N+T}$, the maximum information gain over all possible sets of $N+T$ noisy observations, but this is larger than our target upper bound $\gamma_N$.
    In order to obtain a tighter bound, we need to relate $\sum_{n=1}^{N}\triangle I(\bar{\mathbf{x}}_n)$ to $I([\bar{y}_n]_{n=1}^N,f)$, which is the information gain from just observing $[\bar{y}_n]_{n=1}^N$ and can be upper bounded by $\gamma_N$.

    Because $[y_t]_{t=1}^T$ and $[\bar{y}_n]_{n=1}^N$ are intermingled in $\mathbf{y}_D$ due to the ordering of observations (as can be seen in the two function evaluation steps for the global and local phases in Algorithm \ref{alg:one}), we know that $I([\bar{y}_n]_{n=1}^N,f)\neq\sum_{n=1}^{N}\triangle I(\bar{\mathbf{x}}_n)$.
    However, $I([\bar{y}_n]_{n=1}^N,f)$, which is an information gain, satisfies submodularity \citep{10.5555/3020336.3020377}.
    Since $[\bar{y}_n]_{n=1}^N$ are not necessarily the first $N$ observations in $\mathbf{y}_D$, it follows that $I([\bar{y}_n]_{n=1}^N,f)\geq\sum_{n=1}^{N}\triangle I(\bar{\mathbf{x}}_n)$.
    To put it simply, the information gain from observing $[\bar{y}_n]_{n=1}^N$ immediately at the beginning of the observation process is at least as large as the sum of the incremental information gains from observing these noisy function values later in the process.
    Therefore, we have that:
    \begin{align}
        \sum_{n=1}^{N}\triangle I(\bar{\mathbf{x}}_n) \leq I([\bar{y}_n]_{n=1}^N,f) \leq \gamma_N.\label{information_gain_bound}
    \end{align}
    Combining Ineq.\ref{upperbound_by_variance} and Ineq.\ref{information_gain_bound}, the lemma holds as a result.
\end{proof}

\begin{lemma}[Linking Acquisition Value to Classification Accuracy]\label{epsilon_acc}
    Let \(a(\mathbf{x}^*)=\underset{\mathbf{x}\in \mathcal{S}}{\textup{max }}a(\mathbf{x})\) for a set \(\mathcal{S}\) with \(\hat{\mathcal{H}}_{\mathcal{S}}\) and \(\hat{\mathcal{L}}_{\mathcal{S}}\) being the predicted superlevel and sublevel sets using the classification rules as in Algorithm \ref{alg:one}, respectively. 
    If \(a(\mathbf{x}^*)\leq \epsilon\), the probability that a point \(\mathbf{x}\in\mathcal{S}\) satisfies \(\epsilon\)-accuracy requirements is at least \(\Phi(\beta)\).
\end{lemma}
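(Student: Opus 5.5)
The plan is to work pointwise: fix $\mathbf{x}\in\mathcal{S}$, convert the bound on the maximal acquisition value into a statement about the width of the confidence interval relative to $h$, and then use the Gaussian posterior at $\mathbf{x}$ to control the probability of a misclassification larger than $\epsilon$. Throughout, $\mu$ and $\sigma$ denote the posterior mean and standard deviation of the GP used to classify points of $\mathcal{S}$: $\mathcal{GP}_g$ on $\bar{\mathcal{U}}_T$, or the relevant local GP inside a TR. Since $f$ is a GP sample path, conditionally on the data $f(\mathbf{x})\sim\mathcal{N}(\mu(\mathbf{x}),\sigma^2(\mathbf{x}))$.

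\emph{Step 1.} Because $a(\mathbf{x}^*)$ is the maximum over $\mathcal{S}$, the hypothesis $a(\mathbf{x}^*)\le\epsilon$ gives $a(\mathbf{x})\le\epsilon$ for every $\mathbf{x}\in\mathcal{S}$. With the Straddle form this reads
\[
\beta\sigma(\mathbf{x})\le |\mu(\mathbf{x})-h|+\epsilon .
\]

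\emph{Step 2.} Split into the two classification cases. Suppose $\mathbf{x}\in\hat{\mathcal{H}}_{\mathcal{S}}$, so $\mu(\mathbf{x})\ge h$ and hence $|\mu(\mathbf{x})-h|=\mu(\mathbf{x})-h$. Then Step 1 gives
\[
\mu(\mathbf{x})-\beta\sigma(\mathbf{x})\ge h-\epsilon .
\]
The one-sided Gaussian bound $\Pr\bigl(f(\mathbf{x})\ge\mu(\mathbf{x})-\beta\sigma(\mathbf{x})\bigr)=\Phi(\beta)$ then yields $h-f(\mathbf{x})\le\epsilon$ with probability at least $\Phi(\beta)$. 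The case $\mathbf{x}\in\hat{\mathcal{L}}_{\mathcal{S}}$ is symmetric. Here $\mu(\mathbf{x})<h$, Step 1 gives $\mu(\mathbf{x})+\beta\sigma(\mathbf{x})\le h+\epsilon$, and the upper tail bound gives $f(\mathbf{x})-h\le\epsilon$ with probability at least $\Phi(\beta)$.

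Note that only one side of the confidence interval is needed in each case. This is why the probability is $\Phi(\beta)$ rather than the two-sided $2\Phi(\beta)-1$.

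\emph{Main obstacle.} The delicate point is matching the posterior in the acquisition function to the posterior in the classification rule, and being clear about what the probability refers to. For $\mathcal{S}=\bar{\mathcal{U}}_T$ both use $\mathcal{GP}_g$ at the same time, so the argument goes through directly. If $a$ was evaluated at an earlier time than the final classifier, I would state the lemma for the posterior at the time the maximum is taken, and apply it in Theorem~\ref{main_theorem} with that same snapshot.

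The guarantee is pointwise: for each fixed $\mathbf{x}$, probability at least $\Phi(\beta)$, with no union bound over $\mathcal{S}$. This matches the pointwise notion of accuracy in the Definition. I would also remark that if $\sigma(\mathbf{x})=0$ the statement holds deterministically, since then $f(\mathbf{x})=\mu(\mathbf{x})$.
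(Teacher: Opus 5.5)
Your proposal is correct and follows the paper's proof essentially step for step. The maximum bound gives $\beta\sigma(\mathbf{x})\le|\mu(\mathbf{x})-h|+\epsilon$ pointwise, and in each of the cases $\mathbf{x}\in\hat{\mathcal{H}}_{\mathcal{S}}$ and $\mathbf{x}\in\hat{\mathcal{L}}_{\mathcal{S}}$ a one-sided Gaussian tail bound gives probability $\Phi(\beta)$; your remark that the acquisition function and the classifier must use the same posterior snapshot is a hypothesis the paper leaves implicit (it matters when the lemma is invoked in Theorem~\ref{main_theorem}), but it does not change the argument for the lemma itself.
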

\begin{proof}
    Let \(\mathbf{x}^+\in\hat{\mathcal{H}}_{\mathcal{S}}\) and let \(\mathcal{N}(\mu(\mathbf{x}^+),\sigma^2(\mathbf{x}^+))\) be the posterior distribution used for level set classification at \(\mathbf{x}^+\).
    Given the classification rules, we know that \(\mu(\mathbf{x}^+)>h\).
    Having \(a(\mathbf{x}^*)\leq \epsilon\) means \(a(\mathbf{x})\leq\epsilon\;\forall \mathbf{x}\in\mathcal{S}\).
    This means \(\beta\sigma(\mathbf{x}^+)-|\mu(\mathbf{x}^+)-h|\leq\epsilon\), or \(\mu(\mathbf{x}^+)-\beta\sigma(\mathbf{x}^+)\geq h-\epsilon\) given \(\mu(\mathbf{x}^+)>h\) due to the classification rules.
    Conditioned on \(f(\mathbf{x}^+)\sim\mathcal{N}(\mu(\mathbf{x}^+),\sigma^2(\mathbf{x}^+))\), we have that:
    \begin{align}
        \textup{Pr}(f(\mathbf{x}^+)\leq\mu(\mathbf{x}^+)+\beta\sigma(\mathbf{x}^+))=\textup{Pr}(f(\mathbf{x}^+)\geq\mu(\mathbf{x}^+)-\beta\sigma(\mathbf{x}^+))=\Phi(\beta).\nonumber
    \end{align}
    Therefore, \(\textup{Pr}(h-f(\mathbf{x}^+)\leq\epsilon)\geq\Phi(\beta)\) if \(\mathbf{x}^+\in\hat{\mathcal{H}}_{\mathcal{S}}\). 
    Similarly, we have that \(\textup{Pr}(f(\mathbf{x}^+)-h\leq\epsilon)\geq\Phi(\beta)\) if \(\mathbf{x}^+\in\hat{\mathcal{L}}_{\mathcal{S}}\). The lemma holds as a result.
\end{proof}

With all auxiliary results established, we can now prove Theorem \ref{main_theorem}.

Let $N$ be an iteration that satisfies the theorem's condition, $a_g(\bar{\mathbf{x}}_N) \le A_N$. 
Our goal is to find a uniform accuracy bound $epsilon$ for all $\mathbf{x} \in \bar{\mathcal{U}}_N$.

According to Lemma~A.5, if we can show that $\max_{\mathbf{x} \in \bar{\mathcal{U}}_N} a_g(\mathbf{x}) \le \epsilon$, then any point in that domain is classified with $epsilon$-accuracy with probability at least $\Phi(\beta)$. 
We bound this maximum acquisition value by chaining our preceding results. 
We know that $\max_{\mathbf{x} \in \bar{\mathcal{U}}_N} a_g(\mathbf{x}) \le a_g(\bar{\mathbf{x}}_N)/\kappa_g$.
Applying the theorem's condition to the numerator, followed by the upper bound on the average from Lemma \ref{lemma:global_acquisition_bound}, we have:
$$
\frac{a_g(\bar{\mathbf{x}}_N)}{\kappa_g} \le \frac{A_N}{\kappa_g} \le \frac{1}{\kappa_g} \sqrt{\frac{\beta^2 C_1 \gamma_N}{N}}
$$
This gives a bound on the maximum acquisition value. 
Setting our accuracy bound $\epsilon$ to this value, $\epsilon = \frac{\beta}{\kappa_g} \sqrt{\frac{C_1 \gamma_N}{N}}$, satisfies the condition for Lemma \ref{epsilon_acc}.

\subsection{Proof of Lemma \ref{lem:l6_infinitely_many_non_aggressive}}
\begin{proof}[Proof by Contradiction]
Let $B_N = \sqrt{\frac{\beta^2C_1\gamma_N}{N}}$.
From Lemma \ref{lemma:global_acquisition_bound}, we know that $A_N\leq B_N$, and from Lemma \ref{lemma:acquisition_nonnegativity}, we know that $A_N \ge 0$ for all $N$.
Assuming common kernels with their corresponding maximum information gain as stated in Theorem 5 in \citet{srinivas}, we have that $B_N$ converges to 0 as $N \to \infty$.
By the Squeeze Theorem, this implies that $\lim_{N\to\infty} A_N = 0$.

Now, assume for contradiction that the set $I = \{N \in \mathbb{N} \mid a_g(\bar{\mathbf{x}}_N) \le A_N\}$ is finite. This means there must exist some integer $K_0$ such that for all $N > K_0$, the opposite condition holds:
$$
a_g(\bar{\mathbf{x}}_N) > A_N
$$
We can express the running average recursively as $N A_N = (N-1)A_{N-1} + a_g(\bar{\mathbf{x}}_N)$. Substituting our assumption for $N > K_0$:
\begin{align*}
    N A_N &> (N-1)A_{N-1} + A_N \\
    (N-1)A_N &> (N-1)A_{N-1} \\
    A_N &> A_{N-1}
\end{align*}
This shows that for all $N > K_0$, the sequence $\{A_N\}$ is strictly increasing. Furthermore, since all $a_g(\cdot)$ are non-negative, $A_N > 0$ for all $N$.

We have thus established that $\{A_N\}_{N > K_0}$ is a strictly increasing sequence of positive numbers. However, an increasing sequence of positive numbers cannot converge to 0. This contradicts our initial finding that $\lim_{N\to\infty} A_N = 0$.

Therefore, our initial assumption must be false, and the set $I$ must be infinite.
\end{proof}

%% file: 7_appendix_extra_exp.tex
\section{Details on Experiments}\label{detail_experiment}
\subsection{Experiment Setup}
\begin{table}
    \centering
    \begin{tabular}{llclll}
        \toprule
        Problem (Source) & Domain & \% of superlevel set & \(V_{init}\) & \(V_{max}\) & \(R\) \\ 
        \midrule
        MC2D \citep{c2lse} & $[0,9]^2$ & 12\% & \(10^{-3}\) & \(5\times10^{-2}\) & 10 \\ 
        Mishra03 \citep{aplse} & $[-5,5]^2$ & 61.5\% & \(10^{-4}\) & \(5\times10^{-2}\) & 10 \\
        Levy10 \citep{ha2021high} & $[-10,10]^{10}$ & 20\% & \(10^{-5}\) & \(10^{-1}\) & \(40\) \\
        AA33 \citep{ha2021high} & See source paper & 20\% & \(10^{-4}\) & \(10^{-1}\) & 50 \\
        Mazda74 \citep{mazda74} & See source paper & 24.35\% & \(10^{-20}\) & \(10^{-2}\) & 50 \\
        Levy100 (Extension of Levy10) & $[-10,10]^{100}$ & 20\% & \(10^{-30}\) & \(10^{-2}\) & 50 \\ 
        Vehicle124 \citep{anjos2008mopta} & See source paper & 3.3\% & \(10^{-35}\) & \(10^{-2}\) & 50 \\
        Ackley200 (Extension of Ackley10) & $[-5,10]^{200}$ & 20\% & \(10^{-60}\) & \(10^{-2}\) & 200 \\
        Trid1000 \\\citep{surjanovic2013virtual} & $[-10^6,10^6]^{1000}$ & 20\% & \(10^{-300}\) & \(10^{-2}\) & 50 \\
        Rosenbrock1000 \\\citep{surjanovic2013virtual} & $[-5,10]^{1000}$ & 20\% & \(10^{-300}\) & \(10^{-2}\) & 50 \\
        \bottomrule
    \end{tabular}
    \caption{TRLSE's data sources and hyperparameters}
    \label{tab:hyperparameters}
\end{table}
Table \ref{tab:hyperparameters} lists the hyperparameters of TRLSE for all synthetic and real-world benchmark problems used in this paper. 
As discussed earlier in Section \ref{ablation_v_init}, \(V_{max}\) is often set to be from 1-10\% of the search space's volume, much larger than \(V_{init}\), which is usualy \(0.5^d\), only to stop a TR from getting infinitely large.
The number of regions \(R\) maintained simultaneously is also often quite similar between the problems, regardless of the dimensionality of the underlying function.
Although Lemma \ref{first_lemma} only requires the confidence parameter \(\beta\) to be higher than \(\Phi^{-1}(0.75)(\approx0.6745)\), we keep \(\beta\) fixed at 1.96 to demand higher confidence for all experiments.
We originally ran the 1000D problems with \(R\geq 200\) (at least on par with the number of regions for Ackley200), but prohibitively slow runtimes and memory issues prevent us from setting \(R\) higher for these problems, which require much more computational resources with their higher number of dimensions.

Details about Levy10, Ackley10, Alpine10, Protein20, and AA33 are described in \citet{ha2021high}. 
For Mazda74, we first generate 100,000 uniformly random input points, with 74 dimensions for each point, within the allowed input ranges for SUV (as specified by \citet{mazda74}).
We then run the approximation code (available at https://ladse.eng.isas.jaxa.jp/benchmark/) over these points to obtain the side impact measurement of the SUV, which is used as the output value.
We use 0 as the threshold for Mazda74 to align with the constraint for side impact shown by \citet{mazda74}.
Vehicle124 is originally called MOPTA08 by \citet{anjos2008mopta}, which is a minimization problem with hard constraints.
We follow the practice of \citet{eriksson2021high} to obtain an objective value using soft constraints. 
The specific processing code was taken from the source code in \citet{papenmeier2022increasing}.
The threshold for this problem was determined by raising it until random sampling achieves near zero in F1-score.

In practice, initializing TR with \(V_{init}=0.5^d\) can lead to potential floating-point precision issues.
In fact, the smallest value for \(V_{init}\) shown in Table \ref{tab:hyperparameters} is \(10^{-300}\) for the 1000D problems, which nearly reaches the limit of double-precision floating-point representation.
To test the effect of floating-point precision, we compare using power and multiplication operations with logarithmic operations. 
We observe up to \(10^{-11}\%\) difference in volume calculations and \(10^{-13}\%\) difference in TR side length calculations, with higher differences for higher dimensions, being highest in the 1000D case. 
These differences are negligible in practice, given the lengths of around 0.5.
To prevent potential floating-point precision issues, future implementation should consider using logarithmic operations for volume and side length calculations.
\subsection{The Effect of Random Sampling on Evaluation Results}
Our test set for each problem comprised of 100,000 points randomly sampled across the input space.
While random sampling may not yield the most comprehensive test set for evaluation, in high-dimensional spaces without prior knowledge of the function, it is the best approach to test classification accuracy throughout the entire input space. 
The alternative of using a grid-based test set is simply infeasible in high-dimensional spaces. 
That said, we have tested the effect of random sampling on the evaluation results of TRLSE at the final iteration by varying the size of the test set from 10,000 to 100,000,000 points (and repeating 10 times for each size) on some benchmarks.
From Table \ref{tab:random_sampling}, although increasing the test set size does lead to more stable F1-scores, it is clear that the results reported in Figure \ref{fig:main} are already quite close to the results with much larger test sets. 
\begin{table}
    \centering
    \begin{tabular}{llll}
        \toprule
        Test size & Mazda74 & Ackley200 & Rosenbrock1000 \\ 
        \midrule
        10000 & 0.8677 ($\pm$0.003499) & 0.8379 ($\pm$0.004048) & 0.4202 ($\pm$0.005498) \\
        100000 & 0.8686 ($\pm$0.002103) & 0.8379 ($\pm$0.001500) & 0.4185 ($\pm$0.001230) \\
        1000000	& 0.8684 ($\pm$0.000282) & 0.8379 ($\pm$0.000469) & 0.4180 ($\pm$0.000461) \\
        10000000 & 0.8685 ($\pm$0.000196) & 0.8380 ($\pm$0.000193) & 0.4179 ($\pm$0.000148) \\
        100000000 & 0.8686 ($\pm$0.000050) & 0.8380 ($\pm$0.000050) & 0.4178 ($\pm$0.000056) \\
        \bottomrule
    \end{tabular}
    \caption{Resulting F1-scores with different test sizes.}
    \label{tab:random_sampling}
\end{table}
\section{Statistical Tests}\label{stat_test}
We perform statistical tests to compare the performance of TRLSE with the baselines, previously shown in Figure \ref{fig:main}, based on two aggregated metrics derived from each run: (1) the mean F1-score across all iterations, and (2) the final F1-score at the last iteration.
For each of these two metrics, following the procedure suggested by \citet{JMLR:v7:demsar06a}, we first use the Friedman test to assess the significance of the differences in performance across the different methods on each benchmark problem.
If the Friedman test indicates significant differences, we conduct post-hoc analysis using pairwise Wilcoxon signed-ranked tests to identify specific differences, with p-values adjusted via the Holm-Bonferroni method.
For any pair found to have a statistically significant difference, we then compare their respective means across all repetitions to identify the superior method.
A significance level of 0.05 is used for all tests.
\begin{table*}
    \centering
    \caption{Results of the statistical tests to compare TRLSE with the baselines (with mean F1-score across all iterations).}
    \label{tab:stat_results_final_mean}
    \begin{tabular}{lccccc}
    \toprule
    \multirow{3}{*}{\begin{tabular}[c]{@{}c@{}}Benchmark\\ Problem\end{tabular}} & 
    \multirow{3}{*}{\begin{tabular}[c]{@{}c@{}}Friedman\\ test's\\ p-value\end{tabular}} & 
    \multicolumn{3}{c}{\multirow{2}{*}{\begin{tabular}[c]{@{}c@{}}Wilcoxon signed-ranked\\ test's p-value between TRLSE and\end{tabular}}} & 
    \multirow{3}{*}{\begin{tabular}[c]{@{}c@{}}Best\\ Algorithm\end{tabular}} \\

    & & \multicolumn{3}{c}{} & \\ 
    \cmidrule(lr){3-5}

    & & Random & Straddle & HLSE & \\
    \midrule

    Levy10 & $<$0.001 & 0.0117  & 0.0117 & 0.0117 & TRLSE \\
    AA33 & $<$0.001 & 0.0117  & 0.0117 & 0.0117 & TRLSE \\
    Mazda74 & $<$0.001 & 0.0117  & 0.0117 & 0.0391 & HLSE \\
    Levy100 & $<$0.001 & 0.0117  & 0.0117 & 0.0117 & TRLSE \\
    Vehicle124 & $<$0.001 & 0.0117  & 0.0117 & 0.0117 & Straddle \\
    Ackley200 & $<$0.001 & 0.0117  & 0.0117 & 0.0117 & TRLSE \\
    Trid1000 & $<$0.001 & 0.0117  & 0.0117 & 0.0117 & TRLSE \\
    Rosenbrock1000 & $<$0.001 & 0.0117  & 0.0117 & 0.0117 & TRLSE \\
    \bottomrule
    \end{tabular}
\end{table*}

\begin{table*}[t]
    \centering
    \caption{Results of the statistical tests to compare TRLSE with the baselines (with final F1-score at last iteration).
    TRLSE has a higher mean than the HLSE on Mazda74, but the difference is not statistically significant.}
    \label{tab:stat_results_final_last}
    \begin{tabular}{lccccc}
    \toprule
    \multirow{3}{*}{\begin{tabular}[c]{@{}c@{}}Benchmark\\ Problem\end{tabular}} & 
    \multirow{3}{*}{\begin{tabular}[c]{@{}c@{}}Friedman\\ test's\\ p-value\end{tabular}} & 
    \multicolumn{3}{c}{\multirow{2}{*}{\begin{tabular}[c]{@{}c@{}}Wilcoxon signed-ranked\\ test's p-value between TRLSE and\end{tabular}}} & 
    \multirow{3}{*}{\begin{tabular}[c]{@{}c@{}}Best\\ Algorithm\end{tabular}} \\

    & & \multicolumn{3}{c}{} & \\ 
    \cmidrule(lr){3-5}

    & & Random & Straddle & HLSE & \\
    \midrule

    Levy10 & $<$0.001 & 0.0117  & 0.0117 & 0.0117 & HLSE \\
    AA33 & $<$0.001 & 0.0117  & 0.0117 & 0.0117 & TRLSE \\
    Mazda74 & $<$0.001 & 0.0117  & 0.0117 & 0.4316 & TRLSE/HLSE \\
    Levy100 & $<$0.001 & 0.0117  & 0.0117 & 0.0117 & TRLSE \\
    Vehicle124 & $<$0.001 & 0.0117  & 0.0117 & 0.0117 & Straddle \\
    Ackley200 & $<$0.001 & 0.0117  & 0.0117 & 0.0117 & TRLSE \\
    Trid1000 & $<$0.001 & 0.0117  & 0.0117 & 0.0117 & TRLSE \\
    Rosenbrock1000 & $<$0.001 & 0.0117  & 0.0117 & 0.0117 & TRLSE \\
    \bottomrule
    \end{tabular}
\end{table*}
Tables \ref{tab:stat_results_final_mean} and \ref{tab:stat_results_final_last} present the results of the statistical tests comparing TRLSE with the baselines, using the mean and final F1-scores, respectively. 
The Friedman test's p-values indicate significant differences in performance across the methods for all benchmark problems.
With the mean F1-score as the aggregated performance metric, the Wilcoxon signed-rank test's p-values reveal that TRLSE significantly outperforms the baselines on 6/8 problems.
With the final F1-score as the aggregated performance metric, we observe that TRLSE significantly outperforms the baselines on 5/8 problems, with a tie with HLSE on Mazda74.
This confirms the superiority of TRLSE over the baselines in terms of both mean and final F1-scores.
\section{Ablation Studies}\label{ablation_study}
\subsection{Trust Region Volume}\label{ablation_v_init}
While \(V_{max}\) is often set extremely large at 1-10\% of the total search volume simply as a stopping criterion for the TRs, \(V_{init}\) directly affects the performance of TRLSE as all TRs are initialized at this volume. 
Figures \ref{fig:ablation_levy10_v_init} and \ref{fig:ablation_levy100_v_init} indicate that setting \(V_{init}\) either too big or too small can reduce the performance of the model but is only noticeable in a truly high-dimensional space.
Following experimental results, TRLSE generally works best with \(V_{init}\approx0.5^d\) in practice, striking a local-global balance.
When the TRs are initialized too big, local AFs cannot be optimized accurately. 
When initialized too small, they need more iterations to expand, and the pool for local acquisition optimization is also small, hindering the detection of the threshold boundary at the local level. 
\begin{figure}
    \centering
    \subfigure[Levy10 - \(V_{init}\)\label{fig:ablation_levy10_v_init}]{\includegraphics[scale=0.7]{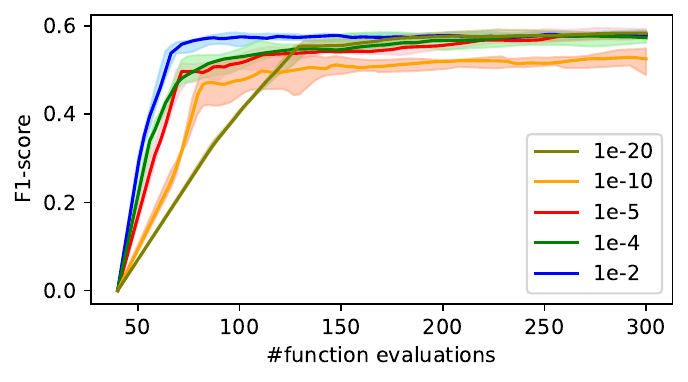}}
    \subfigure[Levy100 - \(V_{init}\)\label{fig:ablation_levy100_v_init}]{\includegraphics[scale=0.7]{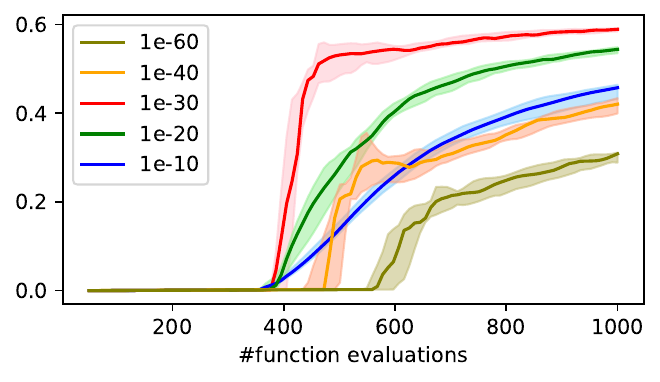}}\\
    \subfigure[Levy10 - \(R\)\label{fig:ablation_levy10_C}]{\includegraphics[scale=0.7]{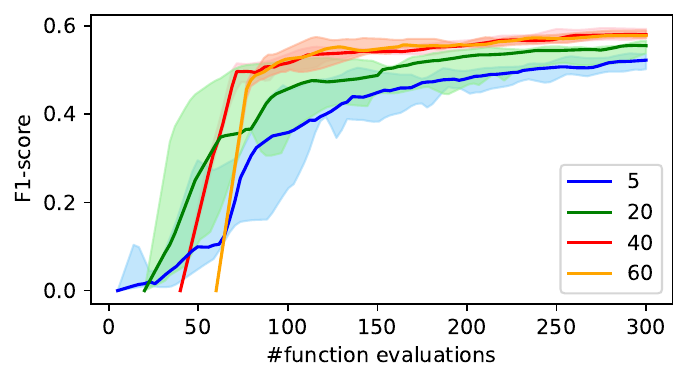}}
    \subfigure[Levy100 - \(R\)\label{fig:ablation_levy100_C}]{\includegraphics[scale=0.7]{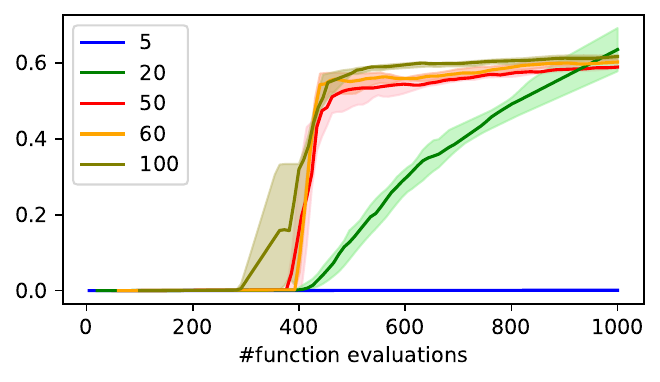}}
    \caption{Ablation studies varying \(V_{init}\) and \(R\)}
    \label{fig:ablation_1}
\end{figure}
\subsection{Number of Simultaneous Regions \texorpdfstring{$R$}{R}}\label{ablation_R}
Figures \ref{fig:ablation_levy10_C} and \ref{fig:ablation_levy100_C} show TRLSE has minimal benefits from an increase of \(R\) and that an extremely small \(R\) can hurt the accuracy of TRLSE. 
Smaller \(R\) means that the pool for local acquisition optimization is also smaller. 
Thus, the point selected by the local AF can be uninformative if \(R\) is small. 
On the other hand, a fixed number of points selected at the local level means that an ever-increasing local pool can only select that many informative data points within the TRs, so performance is not significantly improved.
\subsection{Region Re-initialization}\label{ablation_random_new_tr}
Using the global acquisition function, TRLSE aims to re-initialize underperforming TRs at more informative locations. 
We ablate this design choice by replacing it with random re-initialization where a new TR is spawned at a random location in the search space (still outside the existing TRs).
In Figure \ref{fig:ablation_2}, no clear improvement is seen on a 10D problem when TRLSE uses the global AF, but the difference widens significantly on a 100D problem.
This is expected since the search for informative regions is more challenging on search spaces with higher dimensionality where random re-initialization becomes ineffective.
\begin{figure}[ht]
    \centering
    \subfigure[Levy10\label{fig:ablation_levy10_random_onegp}]{\includegraphics[scale=0.7]{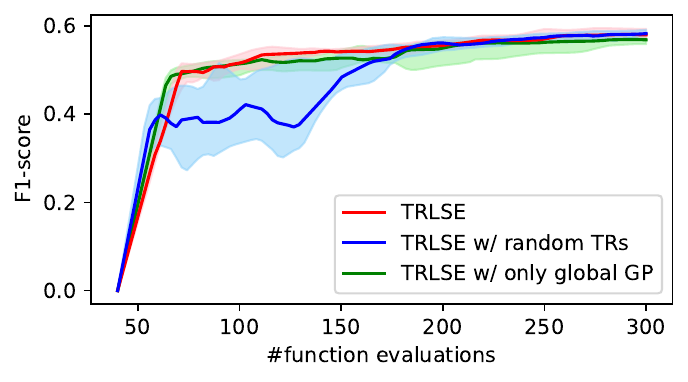}}
    \subfigure[Levy100\label{fig:ablation_levy100_random_onegp}]{\includegraphics[scale=0.7]{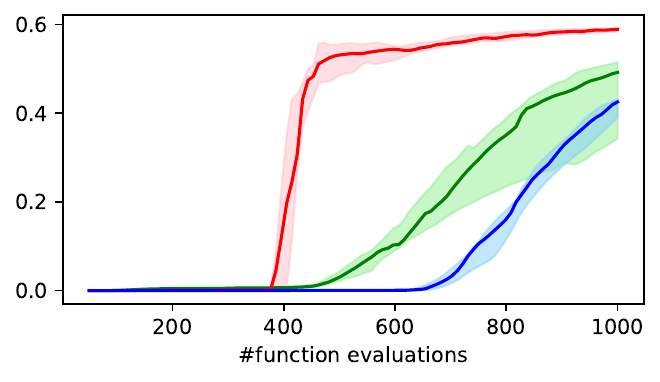}}
    \caption{Ablation studies with: 1) random new TRs, and 2) only the global GP for computations.}
    \label{fig:ablation_2}
\end{figure}
\subsection{Using Local GPs}\label{ablation_one_gp}
We verify the necessity of using local GPs for local acquisition optimization and TR updates using a modified version of TRLSE with only the global GP for local computation. 
Figure \ref{fig:ablation_2} again shows that the difference is small on the 10D problem but becomes significant on the 100D problem, which means using local GPs clearly improves TRLSE's performance. 
This can be explained by a fact demonstrated in Section 3.6 of the TuRBO paper by \citet{NEURIPS2019_6c990b7a}: local GPs, with better fitted hyperparameters, model the local regions better than the global GP. 
More accurate local modeling improves the accuracy of the local AF, which results in better local sampling and eventually better refinement for the boundary region.
\subsection{Other Choices for \texorpdfstring{$S$}{S}}\label{S_choices}
We explore other choices for \(S(\cdot)\), which converts the penalty \(\mathcal{P}_t\) to the adjustment factor for updating the volume of a TR. 
Firstly, Figures \ref{fig:ablation_levy10_S_1} and \ref{fig:ablation_levy100_S_1} show that maintaining constant TR volume (i.e., \(S(u)=1,\forall u\)) degrades TRLSE's performance.
This demonstrates the necessity of updating region volume according to how it centers at the threshold boundary.

Next, we modify \(S_1\) to still take the sigmoid form but output different ranges for the adjustment factor.
Figures \ref{fig:ablation_levy10_S_1} and \ref{fig:ablation_levy100_S_1} show that \(S_1\) and \(S_3\) have similar performances, which aligns with the similarity of the two functions compared to \(S_2\) and \(S_4\).
Both small (i.e. \(S_4\)) and large factors (i.e. \(S_2\)) lead to a minor decrease in performance, though they still converge to the same results.
Slow TR updates (i.e. small adjustment factors) retain underperforming TRs longer than necessary before getting discarded.
With too fast TR updates (i.e. large adjustment factors), premature discards may limit the ability to further explore locally at new promising regions.
However, this also indicates that TRLSE is robust to a wide range of adjustment factors.

Finally, we tried some linear functions for \(S(\cdot)\).
The gap is not significant in Figure \ref{fig:ablation_levy10_linear}, but in \ref{fig:ablation_levy100_linear}, both too slow and too fast region updates (i.e. \(S_5\) and \(S_7\)) again lead to slightly worse performance than \(S_6\), which has a similar ranges compared to \(S_1\).
However, the overall observation is that a linear form for \(S(\cdot)\) performs almost as well as the sigmoid form, indicating that \(S(u)\) only needs to be smoothly decreasing from around 2 to 0 as \(u\) increases from 0.5 to 1.
\begin{figure}
    \centering
    \subfigure[Levy10\label{fig:ablation_levy10_S_1}]{\includegraphics[scale=0.7]{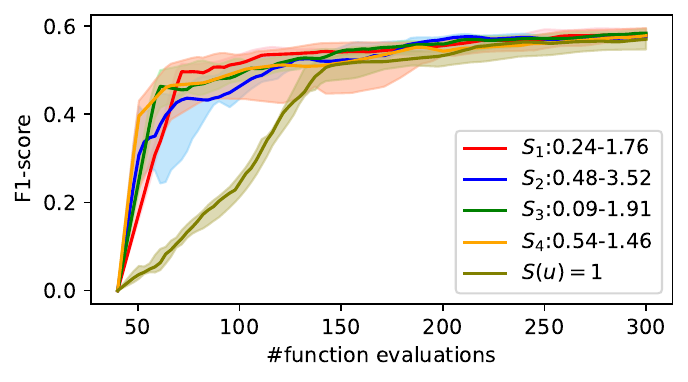}}
    \subfigure[Levy100\label{fig:ablation_levy100_S_1}]{\includegraphics[scale=0.7]{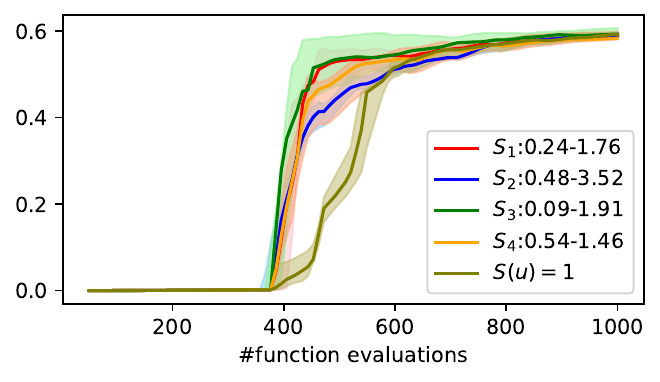}}\\
    \subfigure[Levy10\label{fig:ablation_levy10_linear}]{\includegraphics[scale=0.7]{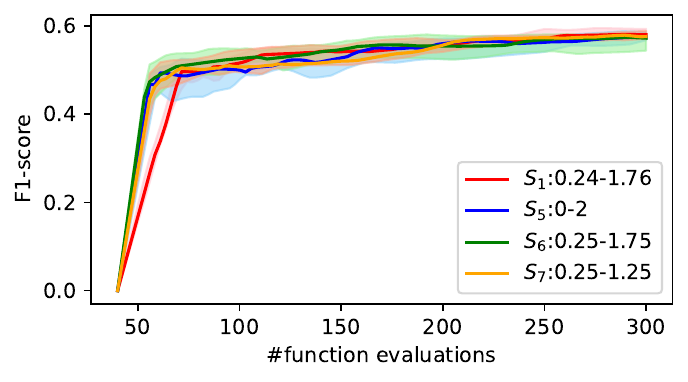}}
    \subfigure[Levy100\label{fig:ablation_levy100_linear}]{\includegraphics[scale=0.7]{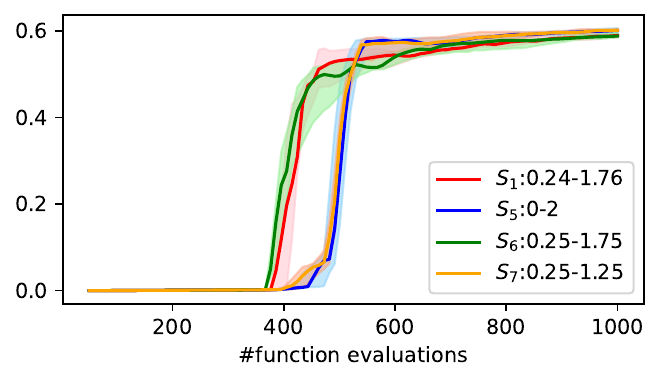}}
    \caption{Ablation studies for \(S(\cdot)\): (a,b) No volume update + sigmoid variants, and (c,d) linear variants.}
    \label{fig:ablation_3}
\end{figure}
\section{Additional Results}
\subsection{Using Different Acquisition Functions}\label{different_acq}
Besides Straddle, we also evaluate TRLSE when it uses other functions for the global and local acquisition functions to test the robustness of the algorithm w.r.t different types of acquisition functions. 
We employ two acquisition functions, namely C2LSE \citep{c2lse} and Thompson Sampling (TS). 
Although popularly adopted in BO, TS has never been used in the context of LSE. 
In this experiment, we define the TS acquisition function for LSE as \(a_{TS}(\mathbf{x})=|\Tilde{f}(\mathbf{x})-h|\) with \(\Tilde{f}(\mathbf{x})\sim \mathcal{N}(\mu(\mathbf{x}),\sigma^2(\mathbf{x}))\) where \(\mathcal{N}(\mu(\mathbf{x}),\sigma^2(\mathbf{x}))\) is the posterior distribution at \(\mathbf{x}\) given by either current global or local GP (depending on where the TS acquisition function is optimized on).

As seen in Figures \ref{fig:c2lse} and \ref{fig:TS}, TRLSE maintains a robust improvement over the baselines for both acquisition functions on most problems. 
An additional observation is that TS exhibits a comparable performance with the other two acquisition functions, making it a viable option for the LSE context.
\begin{figure*}[ht]
    \centering
    \subfigure[Levy10]{\includegraphics[scale=0.37]{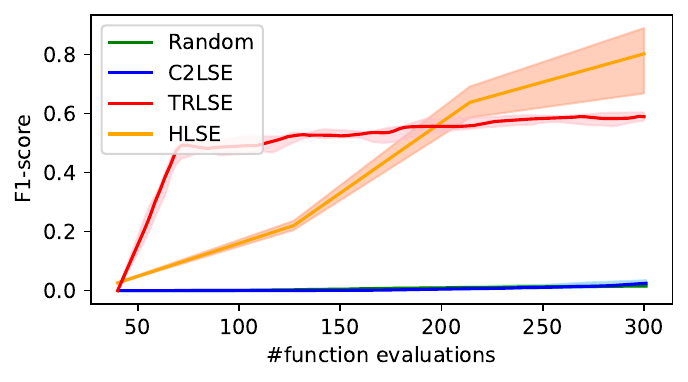}}
    \subfigure[AA33]{\includegraphics[scale=0.37]{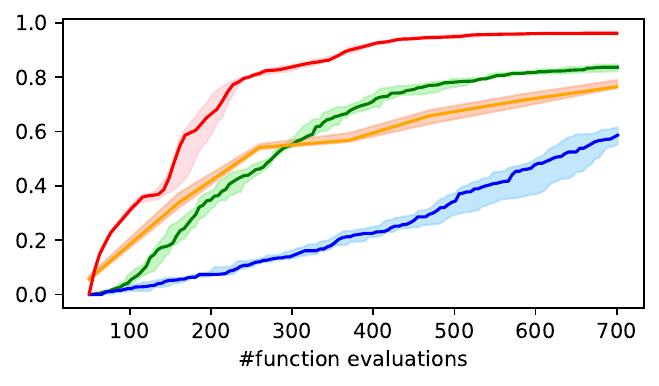}}
    \subfigure[Mazda74]{\includegraphics[scale=0.37]{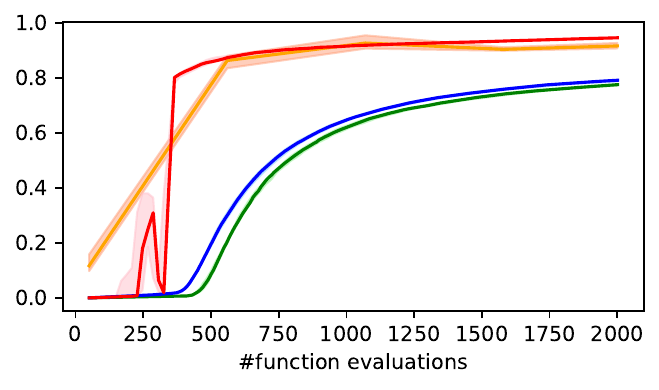}}
    \subfigure[Levy100]{\includegraphics[scale=0.37]{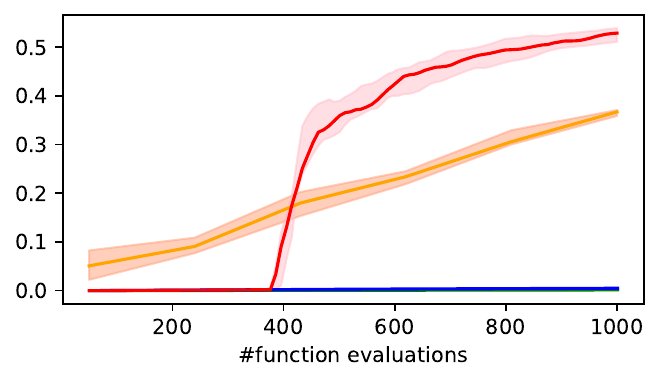}}\\
    \subfigure[Vehicle124]{\includegraphics[scale=0.37]{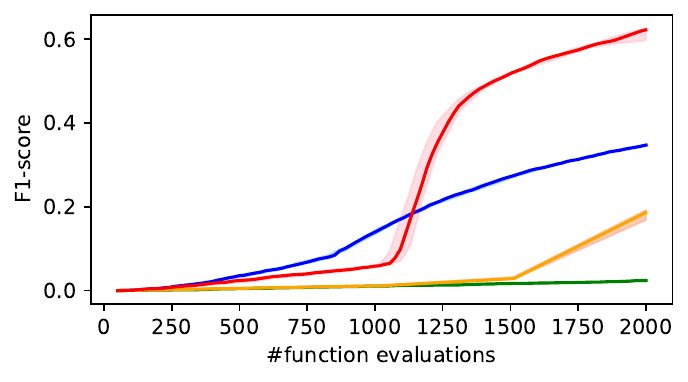}}
    \subfigure[Ackley200]{\includegraphics[scale=0.37]{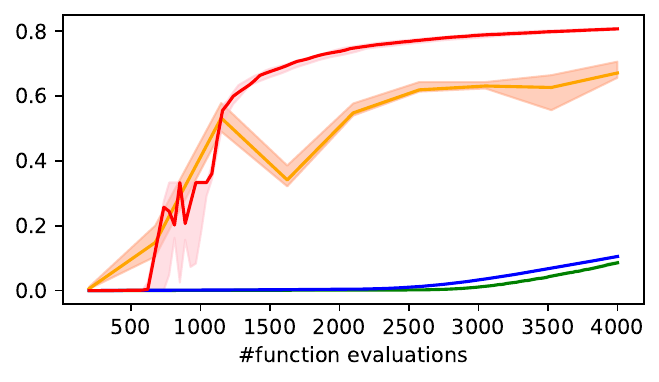}}
    \subfigure[Trid1000]{\includegraphics[scale=0.37]{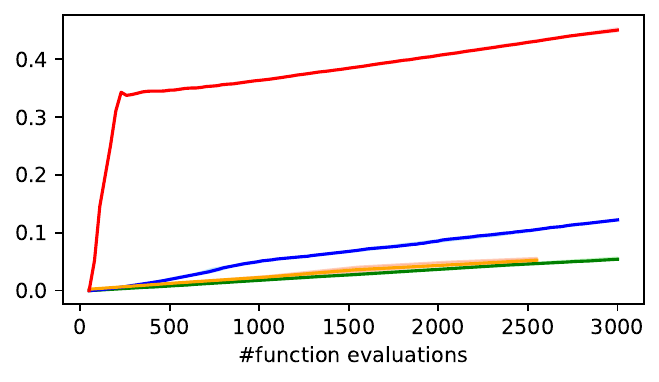}}
    \subfigure[Rosenbrock1000]{\includegraphics[scale=0.37]{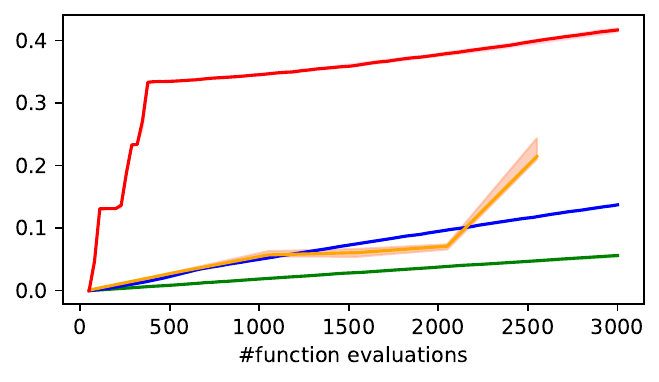}}
    \caption{TRLSE using C2LSE as both global and local acquisition functions}
    \label{fig:c2lse} 
\end{figure*}
\begin{figure*}[ht]
    \centering
    \hspace*{\fill}
    \subfigure[Levy10]{\includegraphics[scale=0.37]{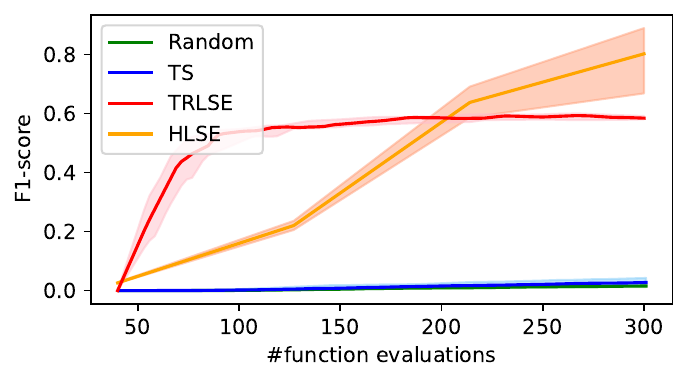}}
    \subfigure[AA33]{\includegraphics[scale=0.37]{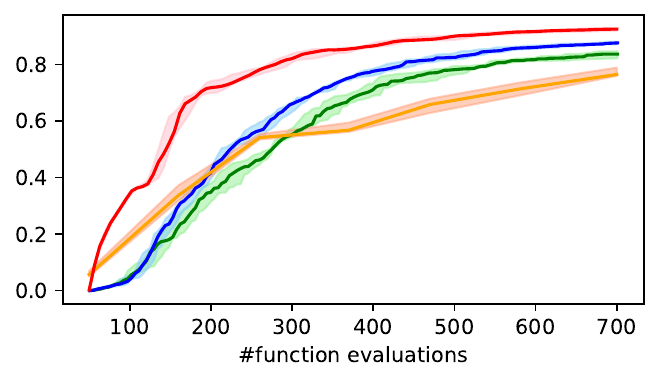}}
    \subfigure[Mazda74]{\includegraphics[scale=0.37]{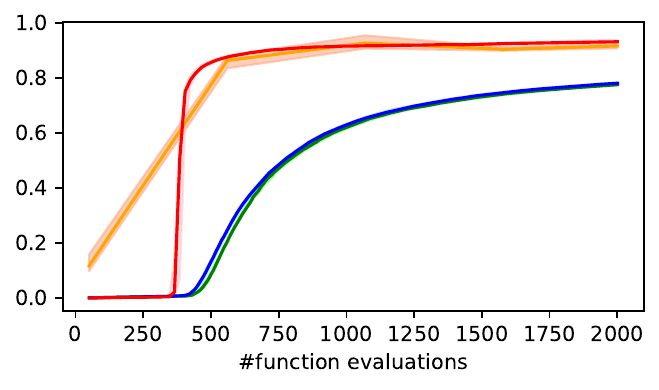}}
    \subfigure[Levy100]{\includegraphics[scale=0.37]{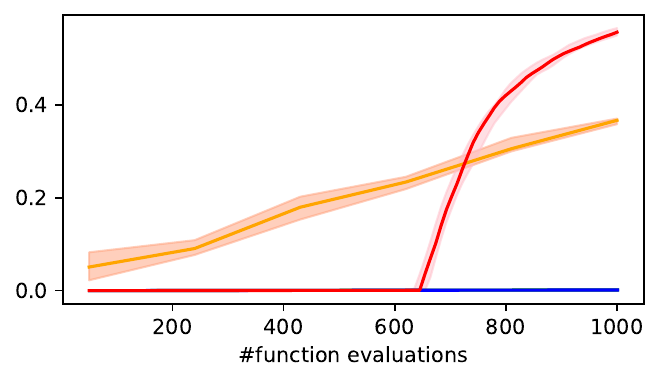}}
    \hspace*{\fill}
    \subfigure[Vehicle124]{\includegraphics[scale=0.37]{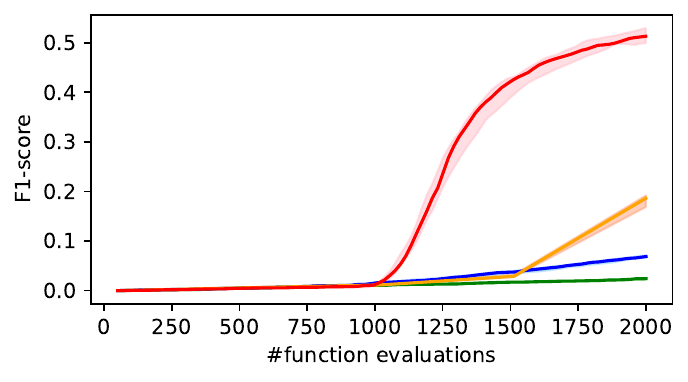}}
    \subfigure[Ackley200]{\includegraphics[scale=0.37]{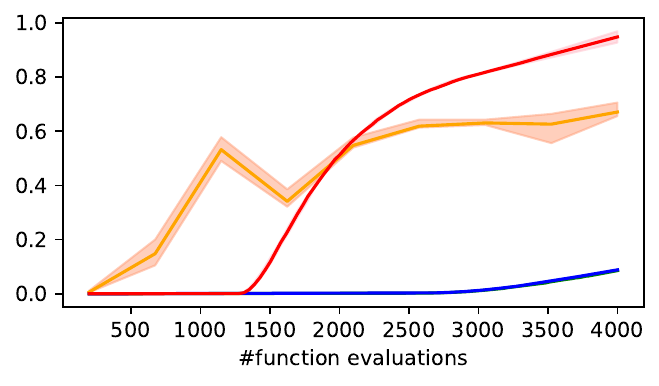}}
    \subfigure[Trid1000]{\includegraphics[scale=0.37]{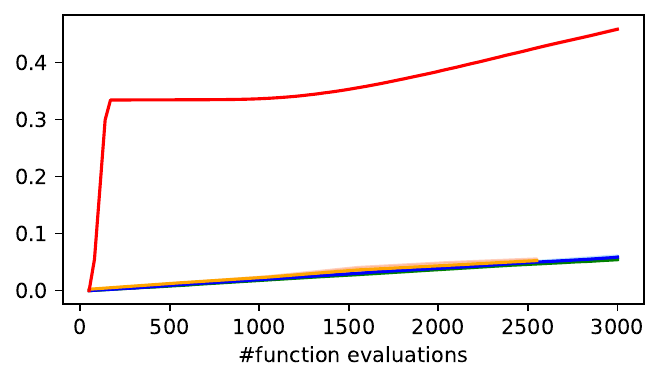}}
    \subfigure[Rosenbrock1000]{\includegraphics[scale=0.37]{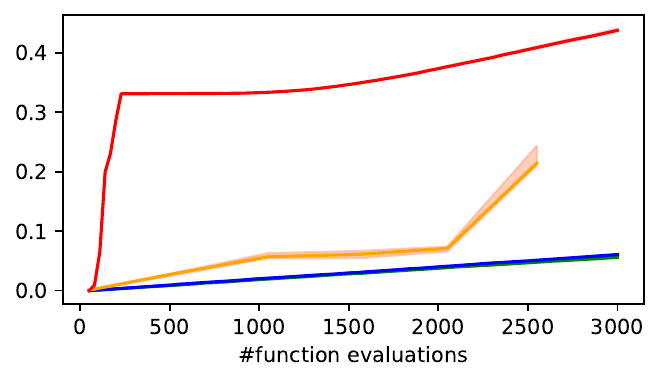}}
    \caption{TRLSE using TS as both global and local acquisition functions}
    \label{fig:TS} 
  \end{figure*}
\subsection{Low-dimensional Problems and Sampling Behavior}
\begin{figure}[h!]
    \centering
    \subfigure[MC2D]{\includegraphics[scale=0.6]{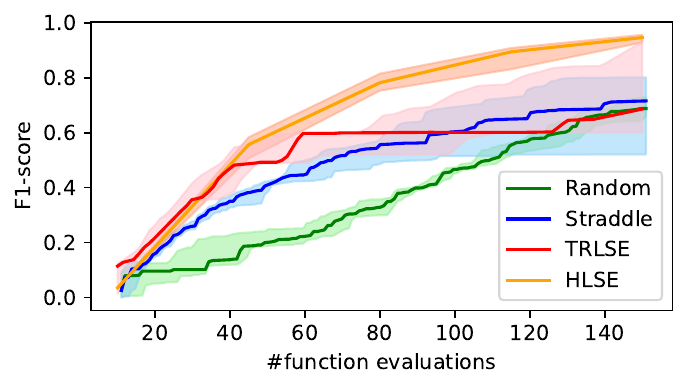}}\quad
    \subfigure[Mishra03]{\includegraphics[scale=0.6]{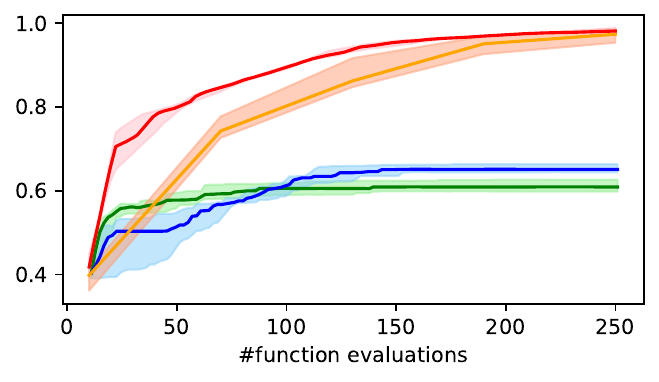}}
    \caption{Results on low-dimensional synthetic functions}
    \label{fig:low}
\end{figure}
Despite not being specifically tailored for low-dimensional problems, TRLSE still demonstrates a competitive accuracy compared to the baselines on two synthetic functions MC2D \citep{c2lse} and Mishra03 \citep{aplse}, as can be seen in Figure \ref{fig:low}. 
These results suggest that TRLSE is a robust algorithm suitable for most LSE algorithms regardless of the dimensionality of the underlying function.
Contrary to the claim that HLSE performs worse than GP-based methods in the low-dimensional settings by \citet{ha2021high}, it actually exhibits significantly better performances than Straddle on both MC2D and Mishra03.
We believe this claim was from the results on Branin (d=2) and Hartman3 (d=3) functions, which are not good benchmarks for LSE algorithms due to the fact that Straddle only needs a few data points for high F1-scores for both functions.
\subsection{Different Kernels}\label{different_kernel}
We ablate the kernel choice for GP modeling, which can greatly influence the performance of GP-based methods.
We rerun the main experiments in Section 5.1, where everything remains the same, but the kernel is replaced by the radial basis function kernel (RBF) and the rational quadratic kernel (RQ).
Note that this is applied to both TRLSE and Straddle, which are GP-based methods, not for HLSE.
As seen on both Figures \ref{fig:rbf} and \ref{fig:rq}, TRLSE still remains better than the baselines on most experiments for both RBF and RQ, asserting the robustness of TRLSE across different kernels.

\begin{figure*}[h]
    \centering
    \hspace*{\fill}
    \subfigure[Levy10]{\includegraphics[scale=0.37]{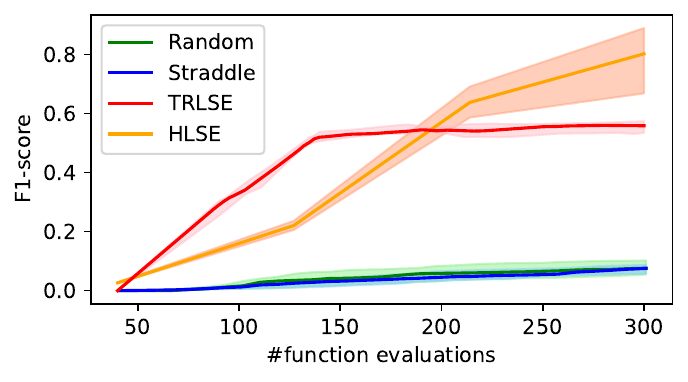}}
    \subfigure[AA33]{\includegraphics[scale=0.37]{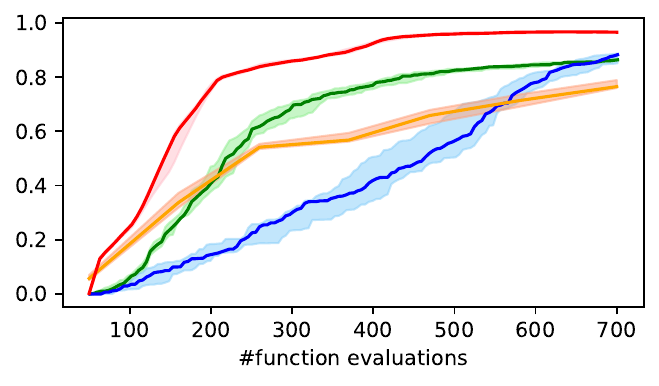}}
    \subfigure[Mazda74]{\includegraphics[scale=0.37]{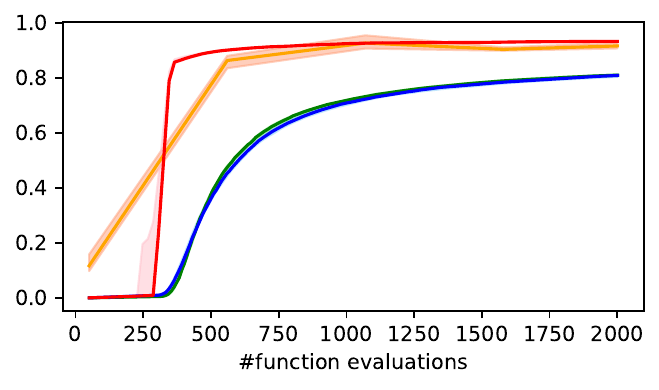}}
    \subfigure[Levy100]{\includegraphics[scale=0.37]{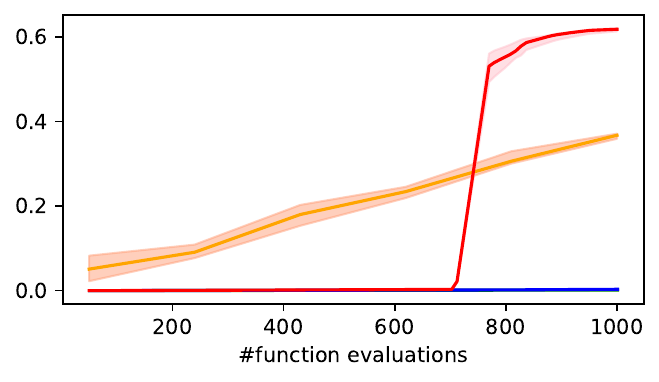}}
    \hspace*{\fill}
    \subfigure[Vehicle124]{\includegraphics[scale=0.37]{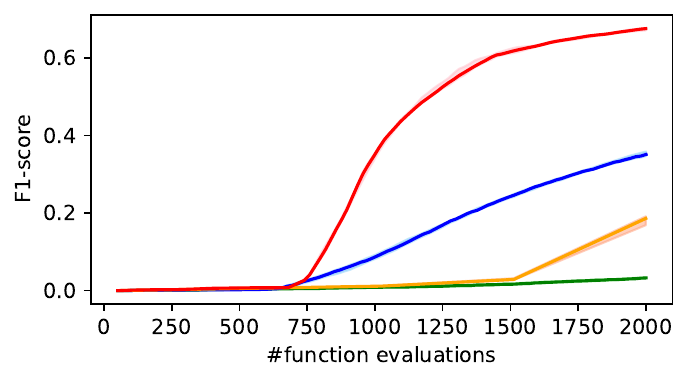}}
    \subfigure[Ackley200]{\includegraphics[scale=0.37]{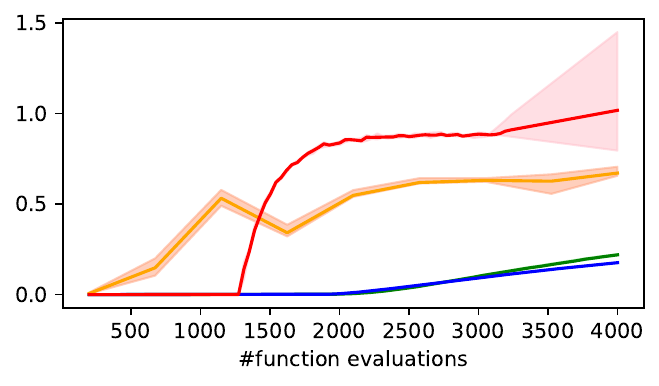}}
    \subfigure[Trid1000]{\includegraphics[scale=0.37]{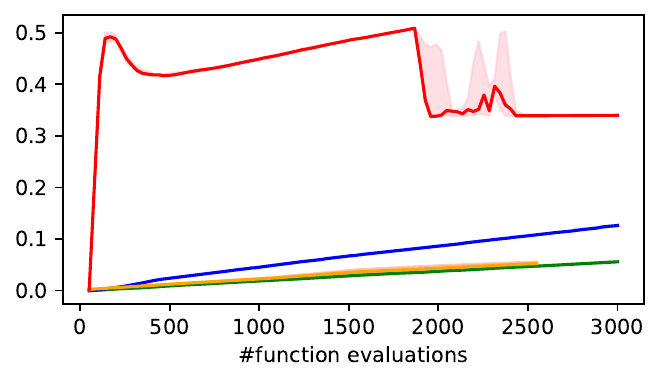}}
    \subfigure[Rosenbrock1000]{\includegraphics[scale=0.37]{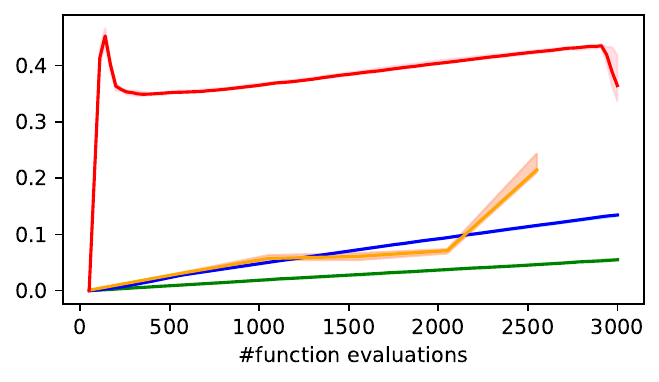}}
    \caption{Using RBF kernel for TRLSE and Straddle}
    \label{fig:rbf} 
\end{figure*}
\begin{figure*}[ht]
    \centering
    \hspace*{\fill}
    \subfigure[Levy10]{\includegraphics[scale=0.37]{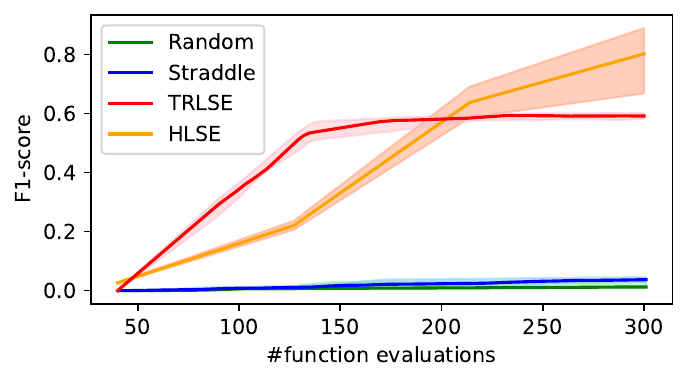}}
    \subfigure[AA33]{\includegraphics[scale=0.37]{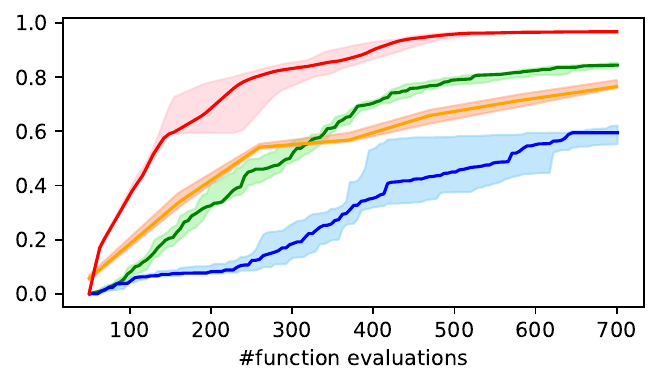}}
    \subfigure[Mazda74]{\includegraphics[scale=0.37]{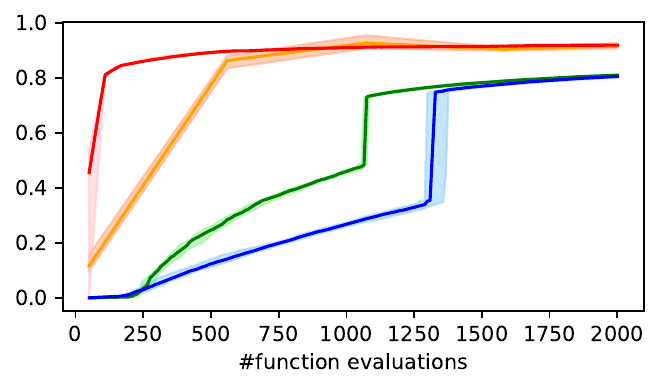}}
    \subfigure[Levy100]{\includegraphics[scale=0.37]{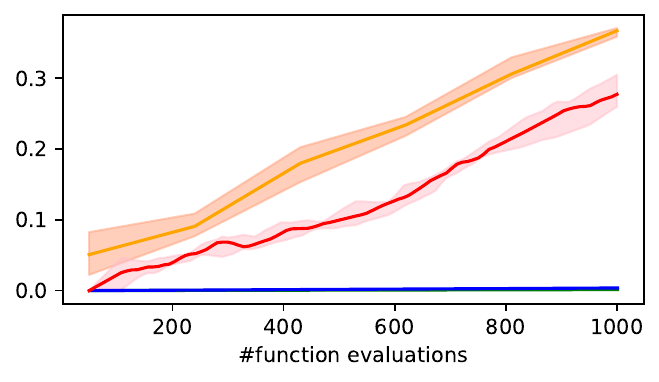}}
    \hspace*{\fill}
    \subfigure[Vehicle124]{\includegraphics[scale=0.37]{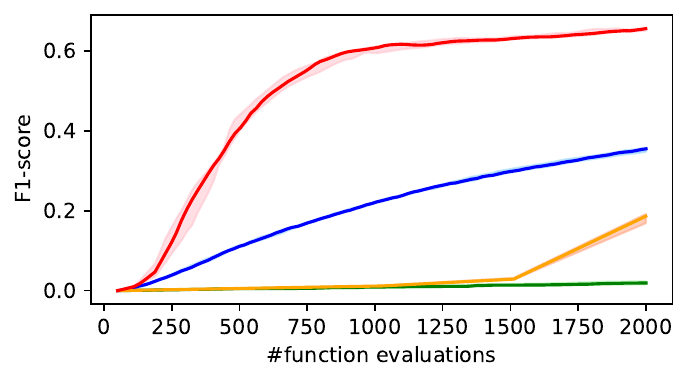}}
    \subfigure[Ackley200]{\includegraphics[scale=0.37]{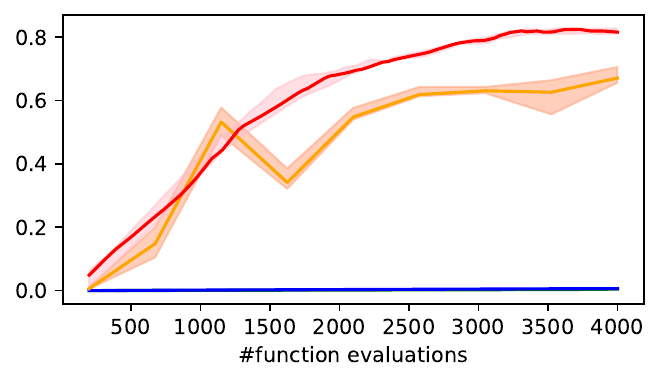}}
    \subfigure[Trid1000]{\includegraphics[scale=0.37]{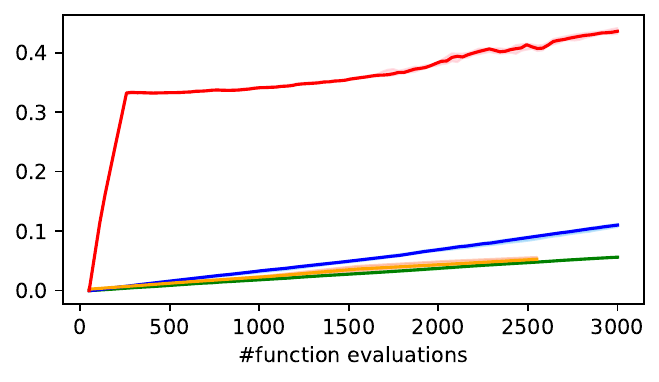}}
    \subfigure[Rosenbrock1000]{\includegraphics[scale=0.37]{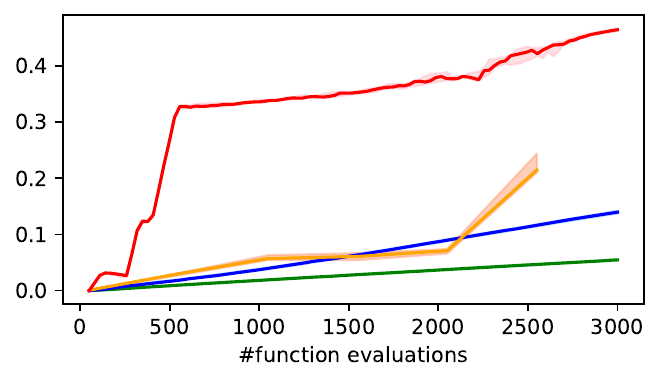}}
    \caption{Using RQ kernel for TRLSE and Straddle}
    \label{fig:rq} 
\end{figure*}
\subsection{Visualizing the Behavior of TRLSE}
\begin{figure*}[ht!]
    \centering
    \subfigure[Iteration 1 - recall=0.21]{\includegraphics[scale=0.20]{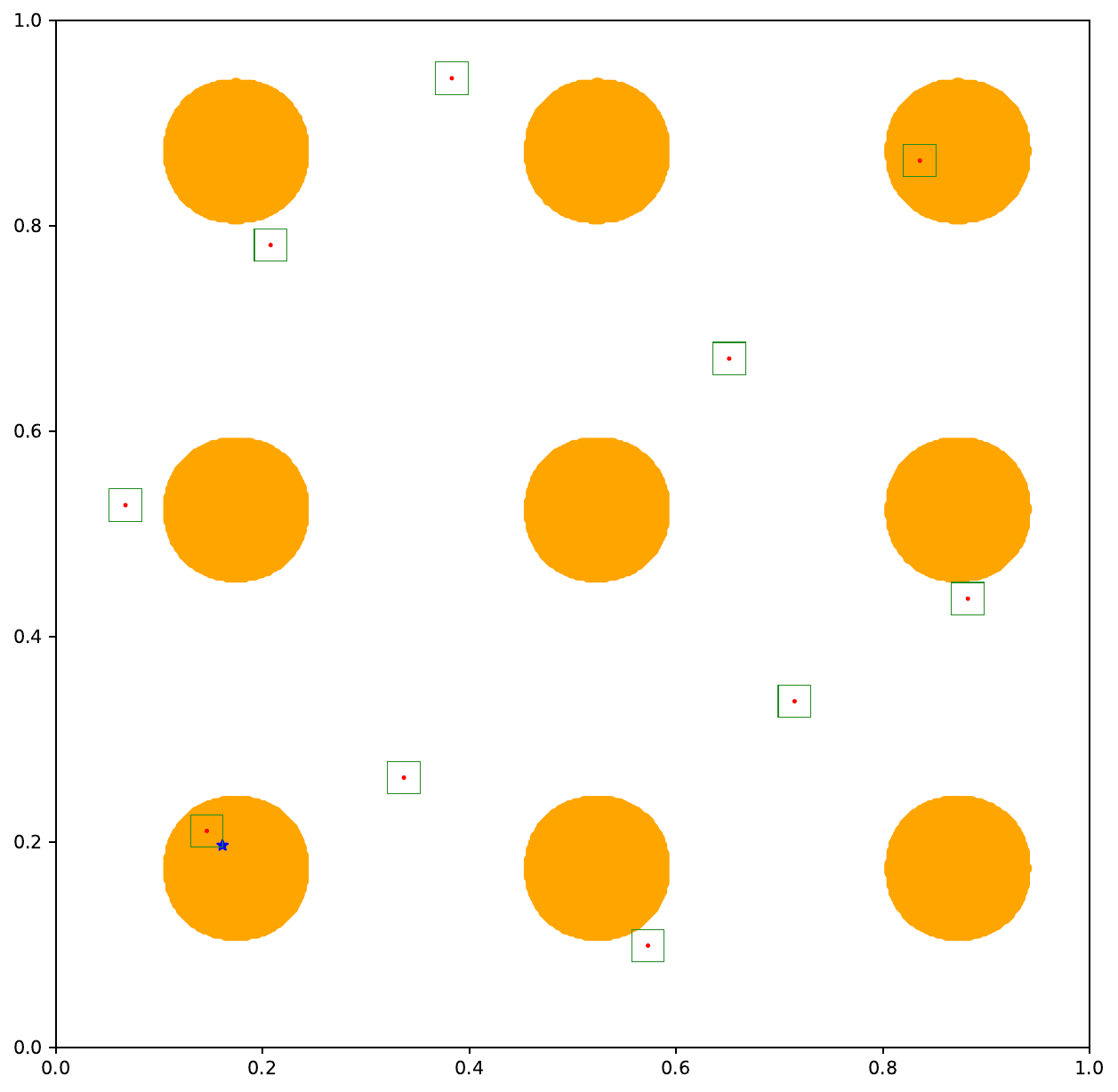}}
    \subfigure[Iteration 2 - recall=0.39]{\includegraphics[scale=0.20]{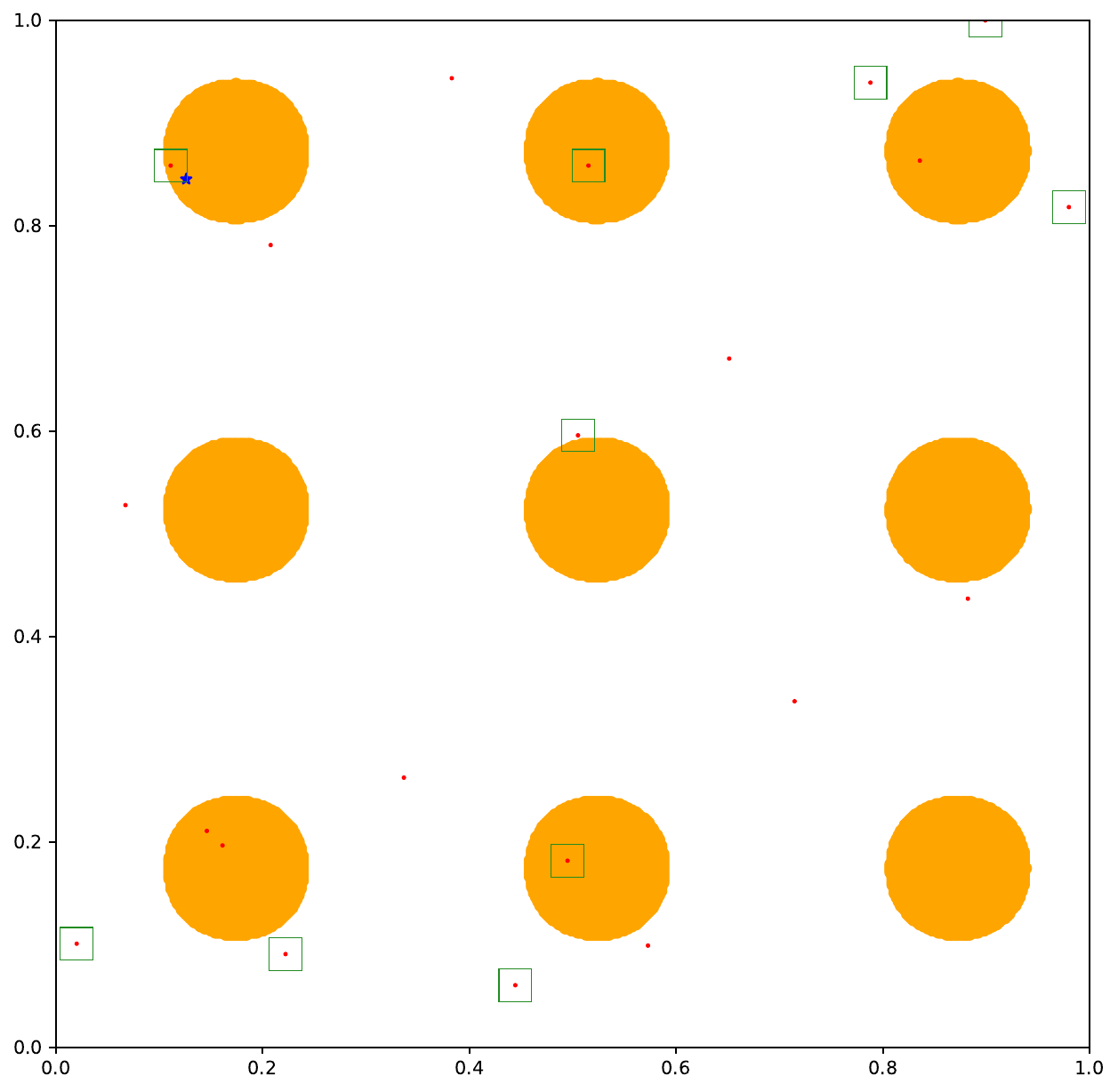}}
    \subfigure[Iteration 3 - recall=0.41]{\includegraphics[scale=0.20]{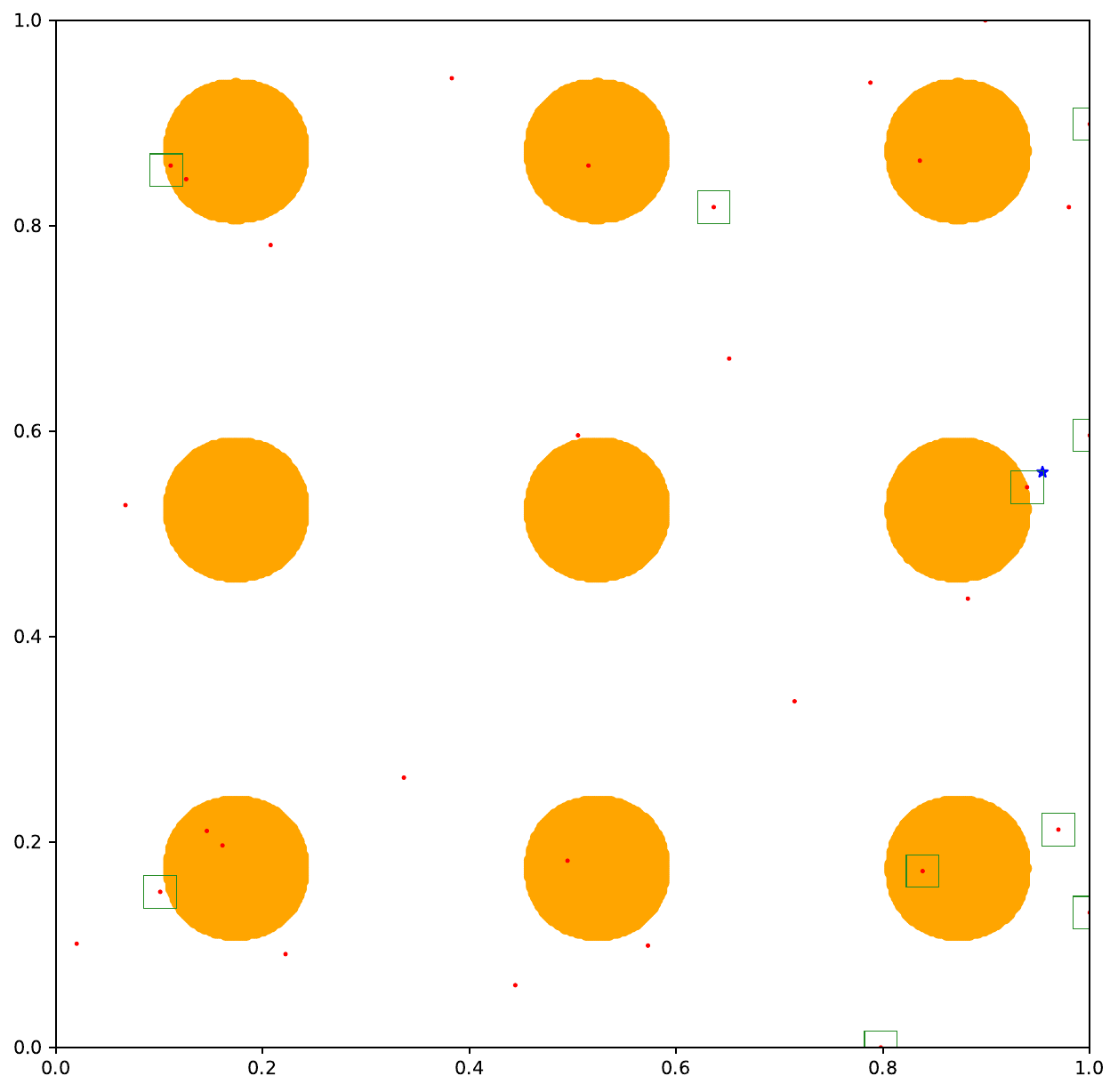}}\\
    \subfigure[Iteration 4 - recall=0.51]{\includegraphics[scale=0.20]{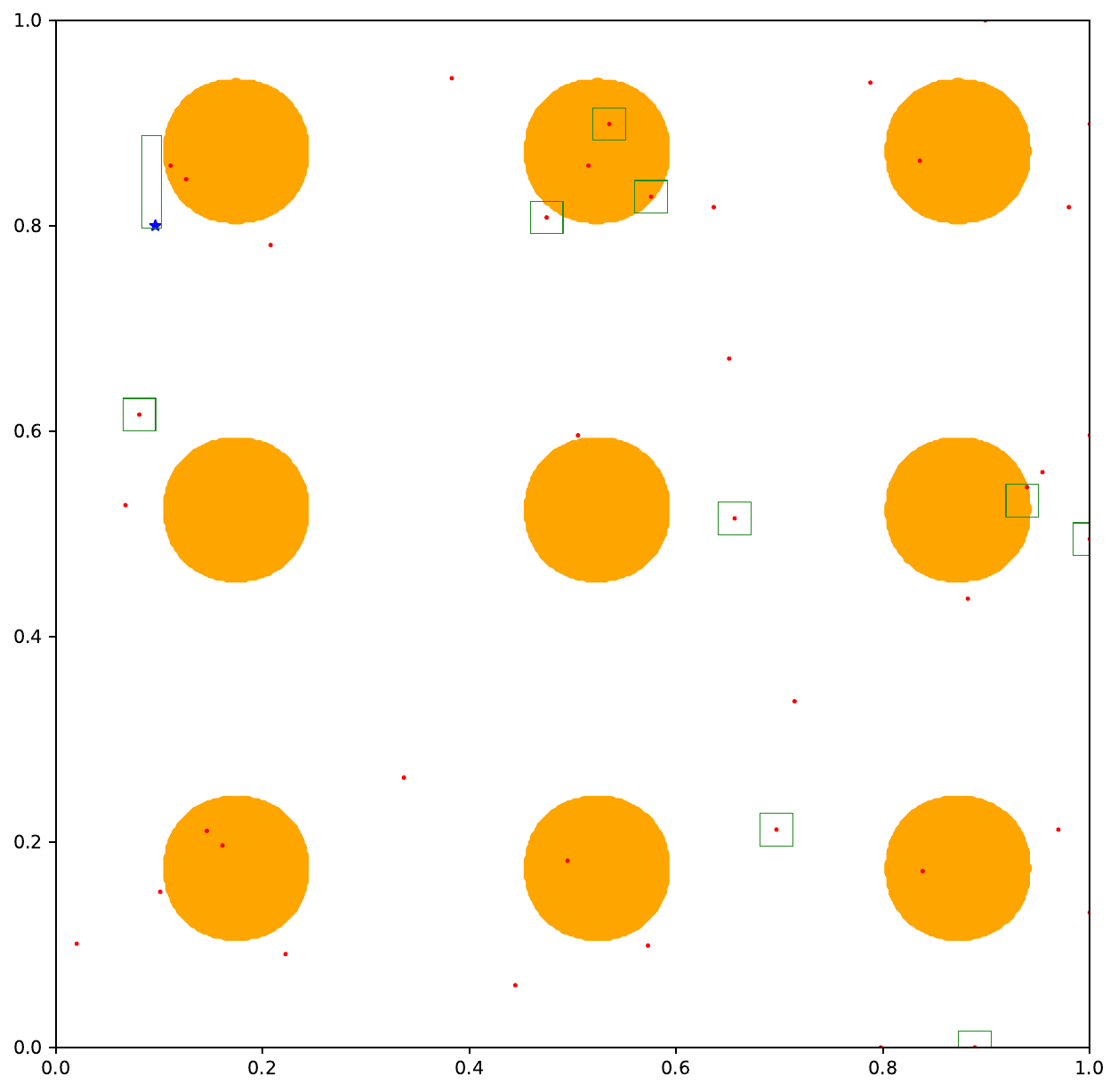}}
    \subfigure[Iteration 5 - recall=0.51]{\includegraphics[scale=0.20]{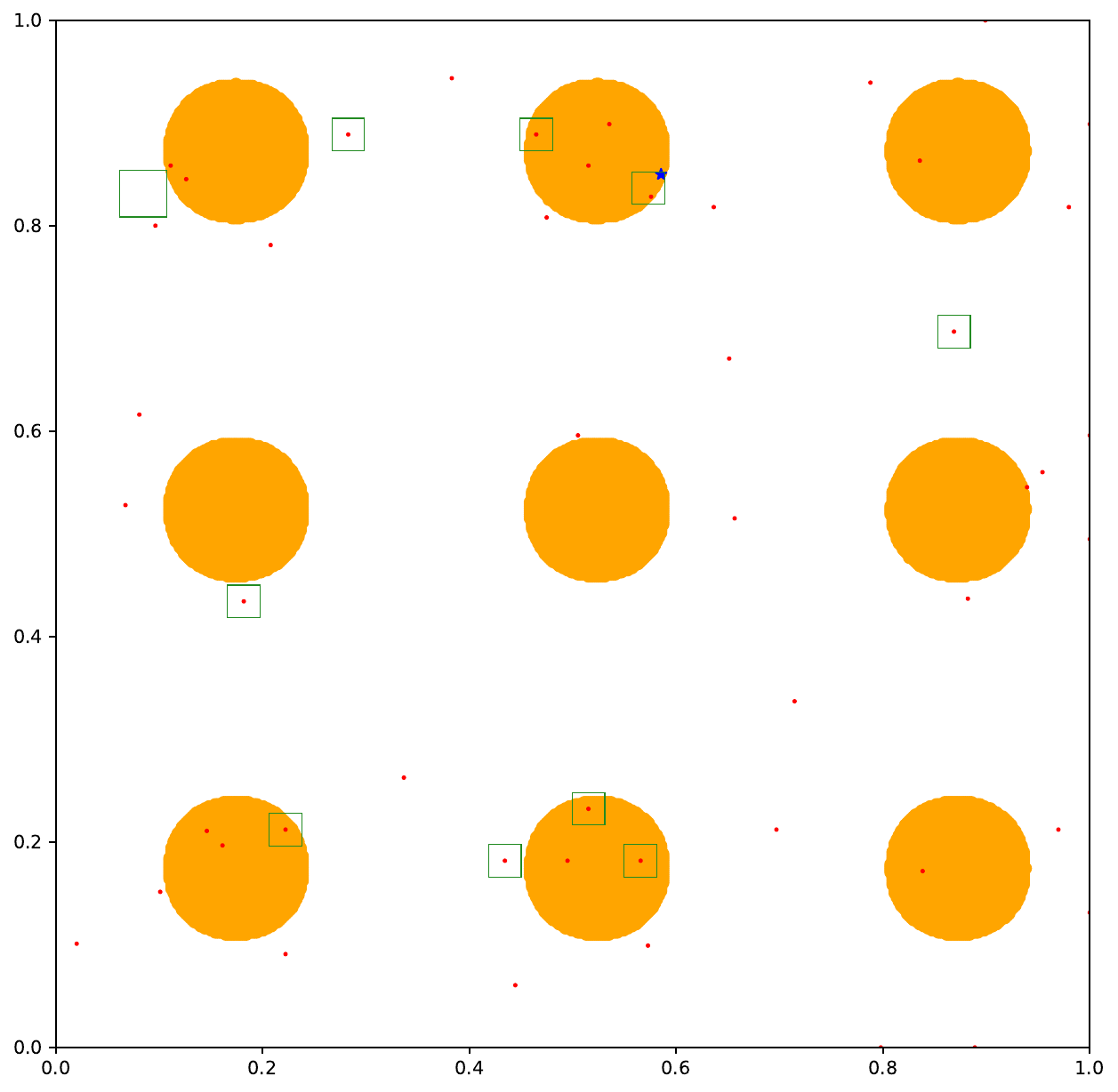}}
    \subfigure[Iteration 6 - recall=0.64]{\includegraphics[scale=0.20]{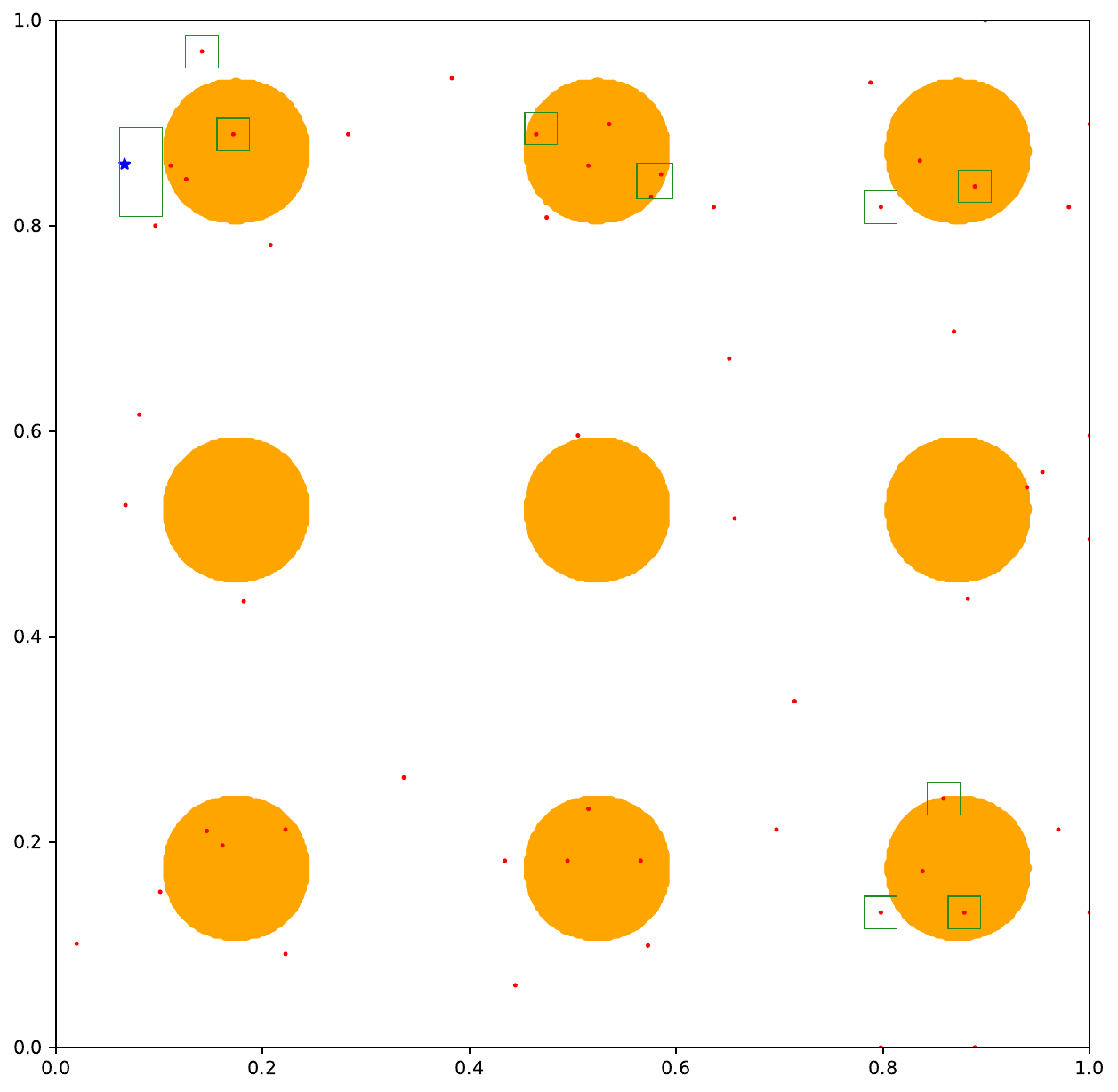}}\\
    \subfigure[Iteration 7 - recall=0.76]{\includegraphics[scale=0.20]{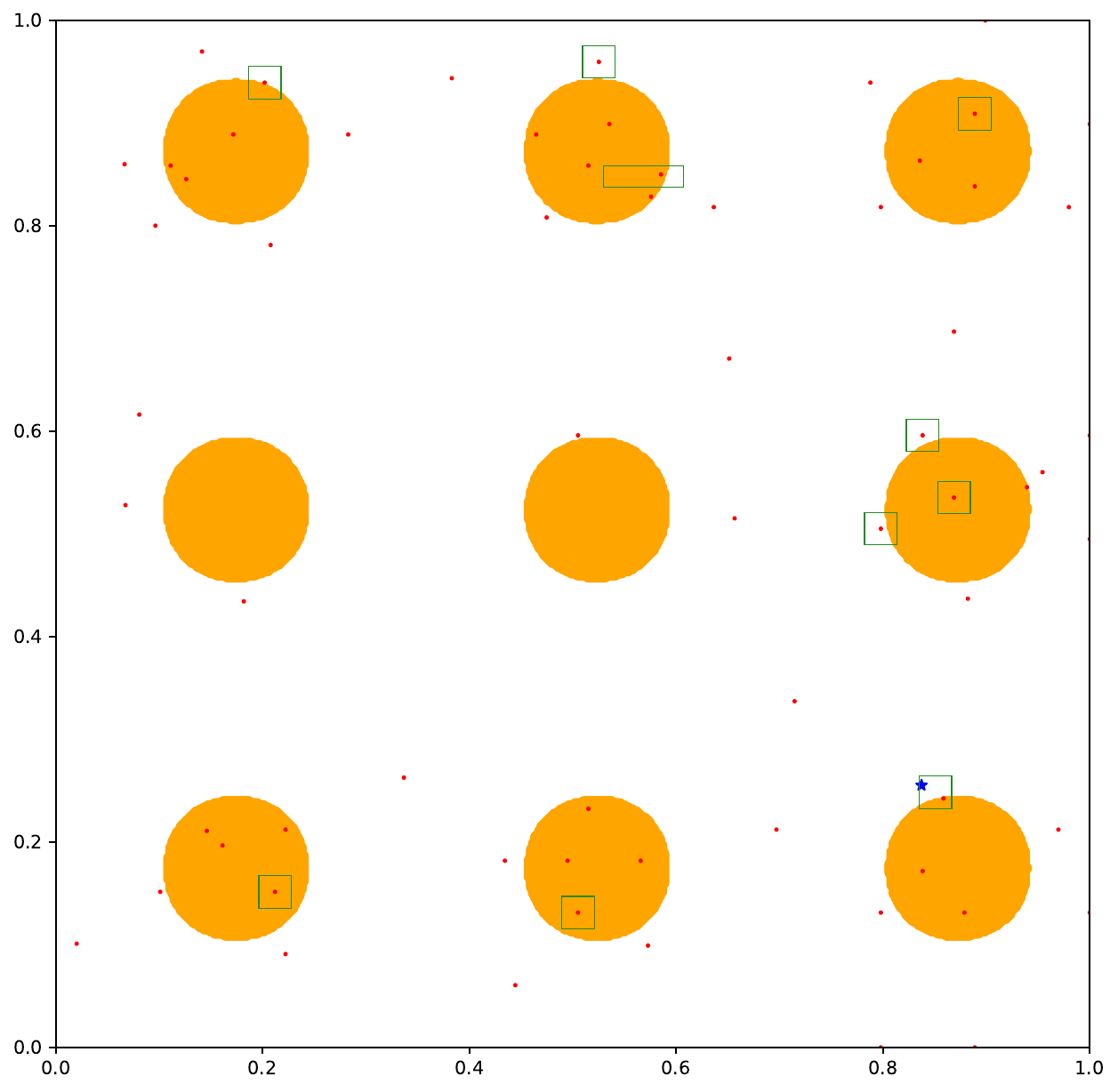}}
    \subfigure[Iteration 8 - recall=0.75]{\includegraphics[scale=0.20]{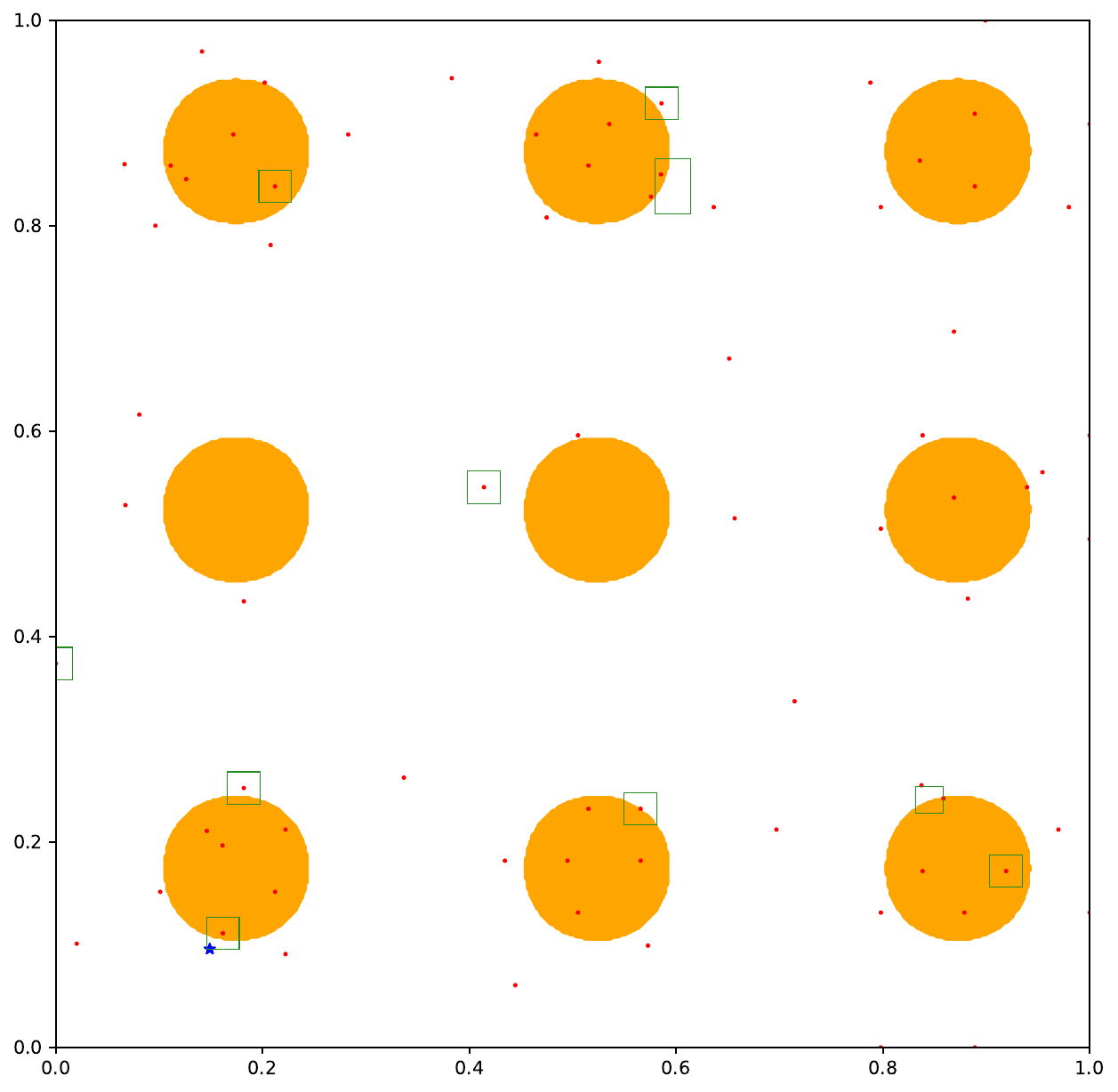}}
    \subfigure[Iteration 9 - recall=0.89]{\includegraphics[scale=0.20]{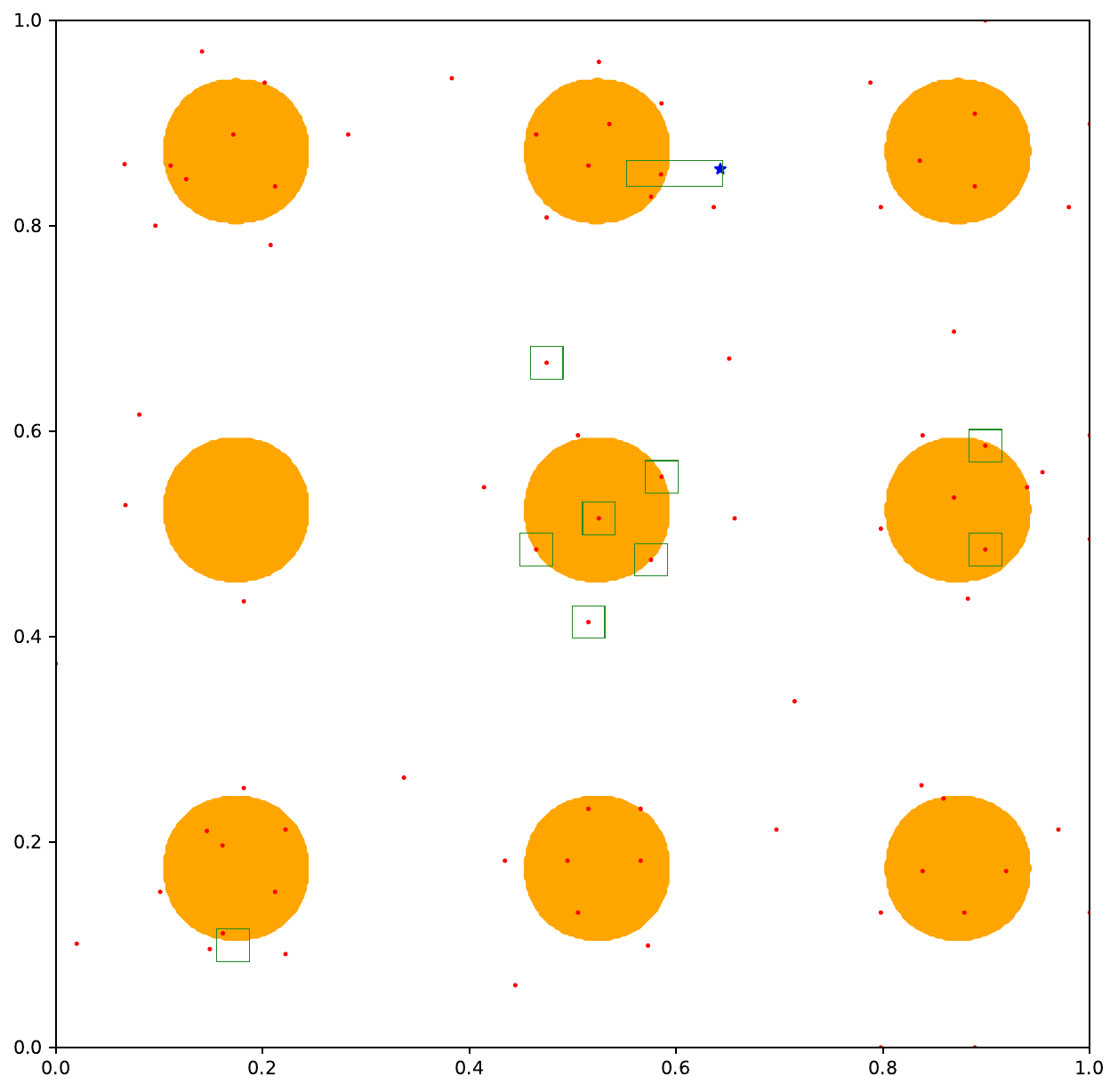}}\\
    \subfigure[Iteration 10 - recall=1.0]{\includegraphics[scale=0.20]{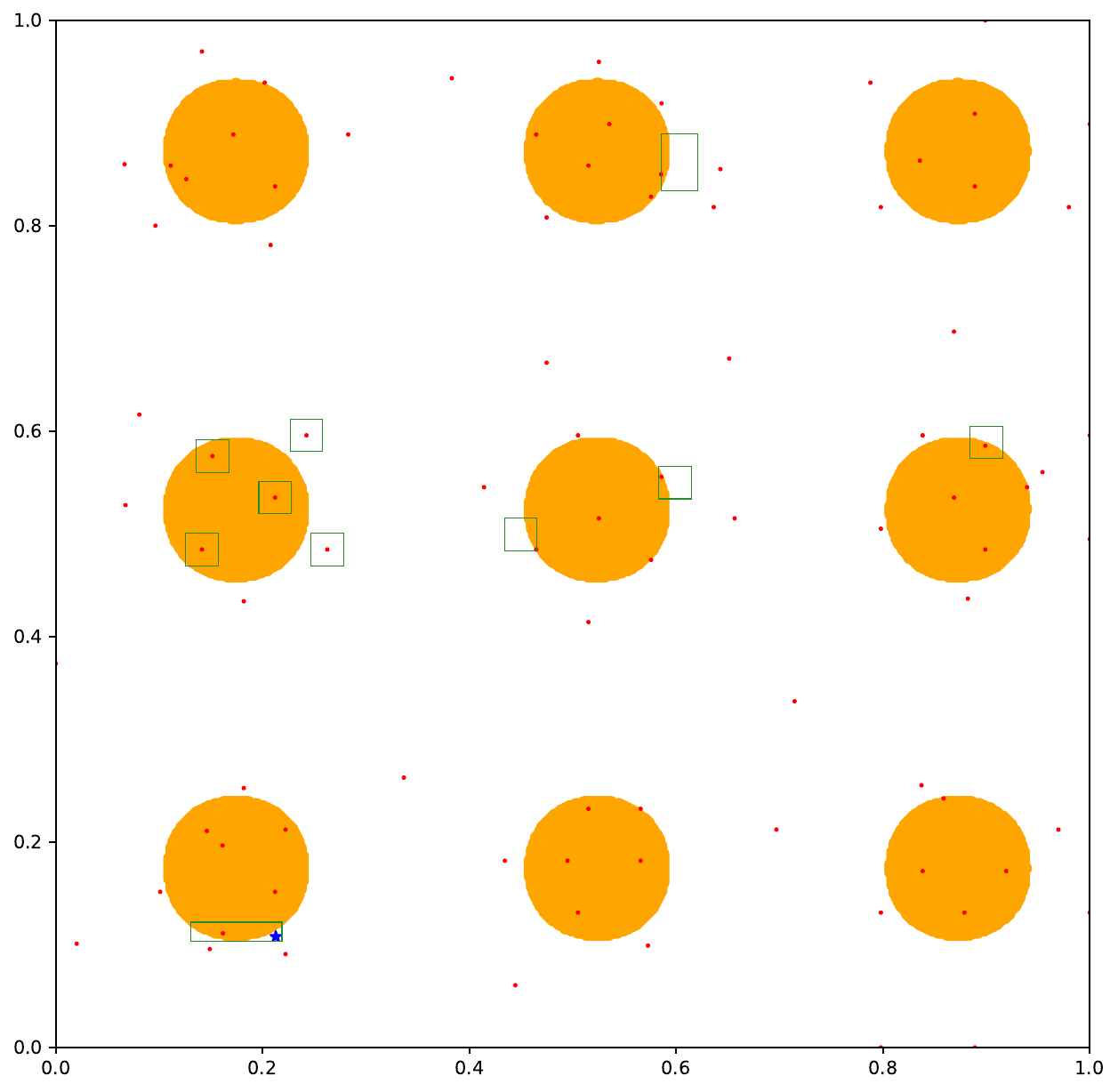}}
    \subfigure[Iteration 11 - recall=1.0]{\includegraphics[scale=0.20]{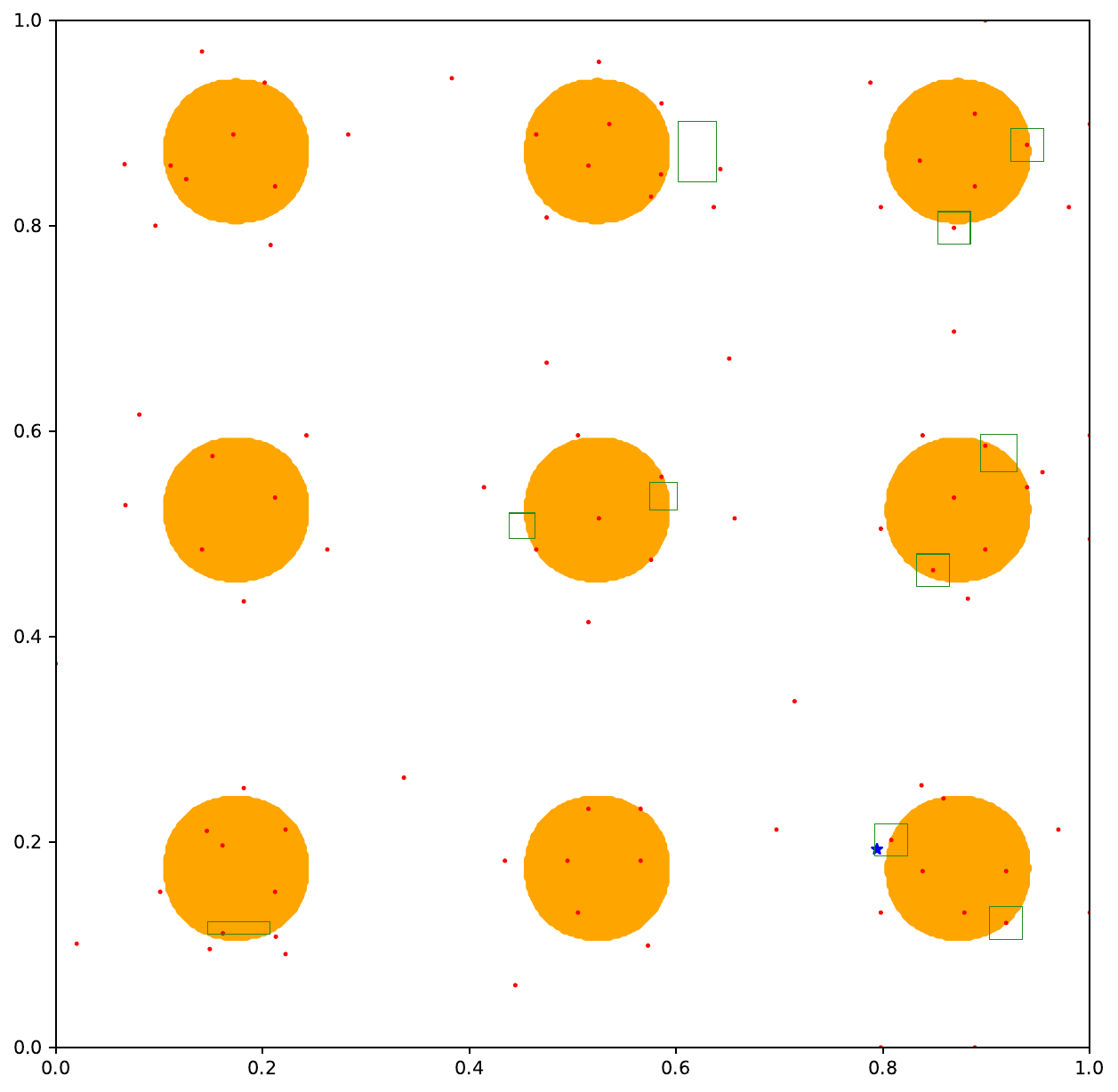}}
    \subfigure[Iteration 12 - recall=1.0]{\includegraphics[scale=0.20]{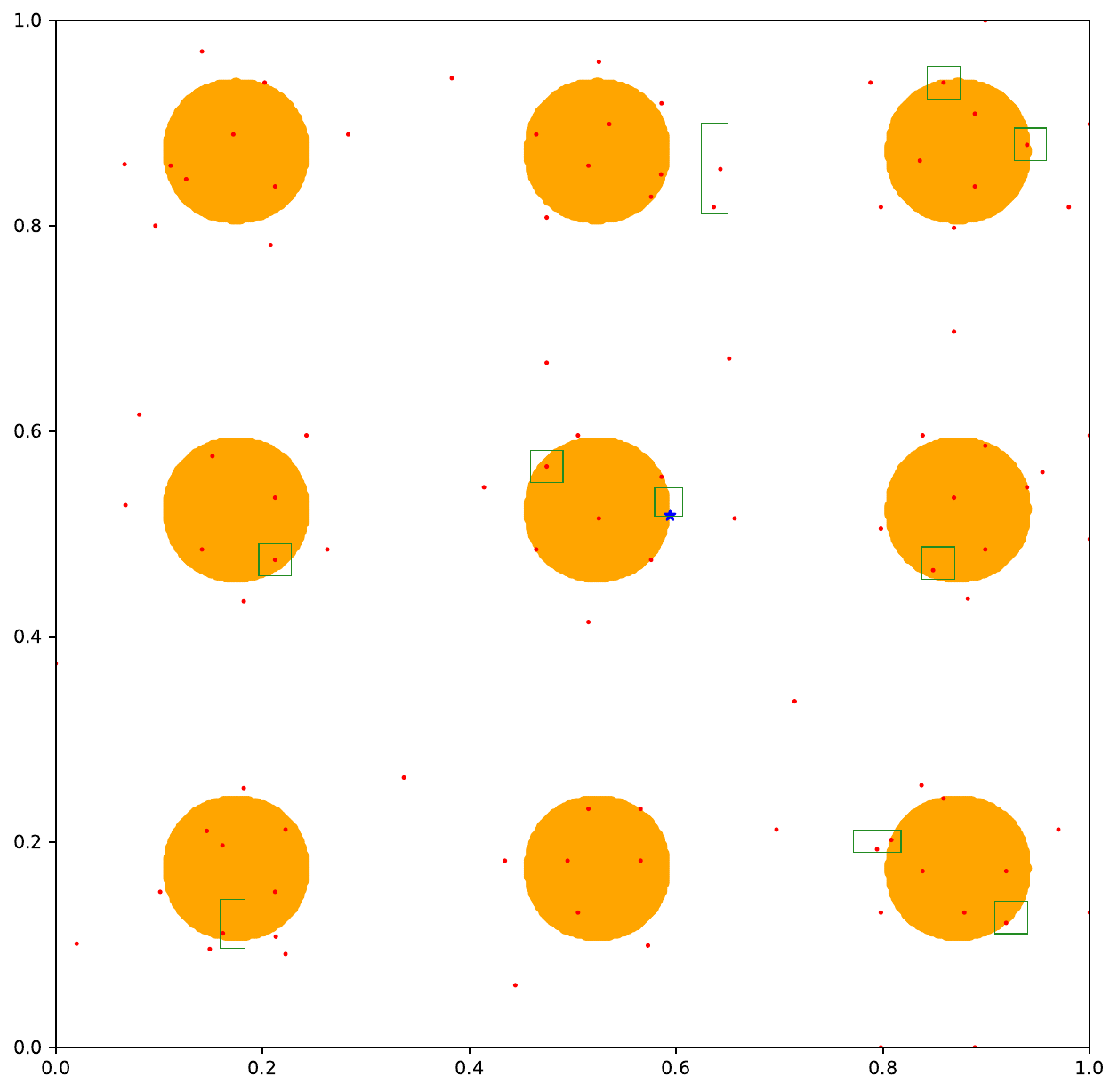}}
    \caption{TRLSE's behavior on MC2D with the corresponding recall for each iteration. 
    The superlevel and sublevel sets are in orange and white, respectively. 
    Red dots represent the training data at the current iteration, and the blue star is the point sampled locally. 
    All TRs are represented by the green boxes.}
    \label{fig:regions} 
\end{figure*}
From Figure \ref{fig:regions}, one can observe how TRLSE behaves on a 2D problem.
A noticeable observation is that TRs far from the threshold boundary quickly get discarded, and as a result, most TRs in later iterations are near the boundary.
Secondly, due to the use of Straddle - a myopic acquisition function, the locally sampled points show explorative behaviors but are still constrained to be inside the TRs, which are relatively close to the boundary.
While these points are not exactly on the boundary, they are near it, which helps with modeling the boundary region, not just locating the boundary, and this is more beneficial for level set classification than simply identifying the boundary's location.
Another observation is that some TRs get discarded prematurely when they have few surrounding data points.
However, this is not essentially bad for the ultimate goal of classifying the whole space.
For example, from iteration 8 to 9, though some TRs, which are close to the boundary, get discarded early, the TRs that replace them discover the boundary region in the center (i.e. five new TRs are spawned in the center in iteration 9).
This explains the sharp jump in recall (from 0.75 to 0.89), showing evidence that this is beneficial for global coverage.